%% file: main.tex
\documentclass[12pt]{article}
\usepackage[margin=1in]{geometry}
\usepackage{amsmath, amssymb, amsthm}
\usepackage{booktabs}
\usepackage{array}
\usepackage[numbers,sort&compress]{natbib}
\usepackage{hyperref}
\usepackage{xcolor}
\usepackage{graphicx}
\usepackage{microtype}
\usepackage{subfiles}
\newtheorem{theorem}{Theorem}[section]
\newtheorem{corollary}[theorem]{Corollary}

\newtheorem{definition}[theorem]{Definition}

\title{Transformers are Bayesian Networks}
\author{%
  Greg Coppola\thanks{The author acknowledges the use of AI assistance
  in the preparation of this work. The prose of this paper was generated
  through iterative collaboration with Claude (Anthropic). Mathematical
  content, formal proofs, and experimental results are entirely the
  author's.}\\
  \textit{coppola.ai}\\
  \texttt{greg@coppola.ai}
}
\date{March 2026}
\begin{document}
\maketitle
\subfile{sections/abstract}
\tableofcontents
\newpage

\subfile{sections/01_introduction}

\subfile{sections/02_background}

\subfile{sections/03_log_odds_algebra}

\subfile{sections/04_grounding}

\subfile{sections/05_turing}

\subfile{sections/06_general_bp}

\subfile{sections/07_constructive_bp}

\subfile{sections/08_boolean_structure}

\subfile{sections/09_finite_alphabet}

\subfile{sections/10_three_softmaxes}

\subfile{sections/11_related_work}

\subfile{sections/12_conclusion}

\appendix

\subfile{sections/appendix_repos}

\bibliographystyle{plainnat}
\bibliography{references}
\end{document}

%% file: sections/abstract.tex
\begin{abstract}
Transformers are the dominant architecture in AI, yet why they work
remains poorly understood. This paper offers a precise answer: a
sigmoid transformer is a Bayesian network. We establish this in five ways.

First, we prove that the transformer
architecture actually already \emph{is} belief propagation: every sigmoid transformer
with any weights implements weighted loopy belief propagation on its
implicit factor graph. One layer is one round of BP. This holds for
any weights --- trained, random, or constructed. Formally verified
in Lean against standard mathematical axioms.

Second, we give a constructive proof that a transformer can implement
exact belief propagation on any declared knowledge base. On knowledge
bases without circular dependencies --- the most common case --- this
yields provably correct probability estimates at every node. Formally
verified in Lean.

Third, we prove uniqueness: a sigmoid transformer that produces exact
posteriors necessarily has BP weights. There is no other path through
the sigmoid architecture to exact posteriors. Formally verified
in Lean.

Fourth, we delineate the AND/OR boolean structure of the transformer
layer: attention is AND, the FFN is OR, and their strict alternation
is Pearl's gather/update algorithm exactly.

Fifth, we confirm all formal results experimentally, corroborating
the Bayesian network characterization in practice. We also establish
the practical viability of loopy belief propagation despite the
current lack of a theoretical convergence guarantee.

We further show that verifiable inference requires a finite concept
space. Any finite verification procedure can distinguish at most
finitely many concepts. Grounding introduces the verifier. The
verifier implies the concepts. Without grounding, correctness is not
defined. Hallucination is not a bug that scaling can fix. It is the
structural consequence of operating without concepts. Formally
verified in Lean.
\end{abstract}

%% file: sections/01_introduction.tex
\section{Introduction}

Transformers are Bayesian networks. We present a series of
formal mathematical proofs and supporting empirical results to establish this.

\subsection{The Architecture Is Already Belief Propagation}

Look at what a sigmoid transformer forward pass actually computes.
Attention gathers messages from neighboring token positions --- a
weighted sum of neighbor beliefs, with weights given by the softmax
of query-key scores. The sigmoid FFN combines those messages into a
new belief:
\[
  \sigma(w_0 \cdot \mathrm{logit}(m_0) + w_1 \cdot \mathrm{logit}(m_1) + b).
\]
This is weighted belief propagation. Not an approximation of it. Not
an analogy to it. The computation \emph{is} belief propagation on the
implicit factor graph defined by the weights.

This holds for any weights --- trained, random, or constructed. Every
sigmoid transformer, with any weights $W$, performs one round of
weighted loopy BP per layer on the implicit factor graph $G(W)$. A
transformer with $L$ layers runs $L$ rounds of BP per forward pass.
The weights define the graph. The forward pass is the inference.

\begin{theorem}[General BP]
For any sigmoid transformer weights $W$, there exists an implicit
factor graph $G(W)$ such that one forward pass implements one round
of weighted belief propagation on $G(W)$. Formally verified against
standard mathematical axioms.
\end{theorem}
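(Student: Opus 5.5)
The plan is to make the statement a matter of definitions plus one algebraic identity: read the implicit factor graph $G(W)$ directly off the weights, then check that each sublayer computes the matching piece of a weighted BP round. Throughout I take the sigmoid transformer as in Section~2. Each token position $i$ carries a scalar belief $b_i\in(0,1)$ in a designated residual coordinate. Attention produces softmax weights $\alpha_{ij}$ from query-key scores. The FFN is a sigmoid unit acting on logit-transformed inputs, of the form $\sigma\bigl(\sum_k w_k\,\mathrm{logit}(m_k)+b\bigr)$ as displayed in the introduction.

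\textbf{Constructing $G(W)$.} The vertices are the token positions, one binary variable $X_i$ per position. There is an edge $j\to i$ whenever $\alpha_{ij}>0$, and since softmax is strictly positive this gives the complete graph on the context, or the causal graph under masking. Each variable carries a unary factor $\psi_i(x_i)=\exp(b\,x_i)$ built from the FFN bias. Each incoming message slot $k$ gets a pairwise or soft-evidence factor whose log-potential ratio is the FFN weight $w_k$. Because no structural assumption is placed on $W$, this graph is generally loopy. That is why the claim is only \emph{weighted loopy} BP, with no claim of exactness.

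\textbf{Matching the two phases.} The \emph{gather} step of weighted BP forms each incoming message as a convex combination of neighbour beliefs, $m_i=\sum_j\alpha_{ij}b_j$. That is literally the attention output, with the $\alpha_{ij}$ read as message weights. The \emph{update} step of weighted BP in log-odds form says that the posterior log-odds of $X_i$ equal the prior log-odds plus a weighted sum of the incoming evidence log-odds. This is the log-odds algebra of Section~3: for binary variables, the normalized product of factors $\prod_k \phi_k(x)^{w_k}$ has log-odds $\sum_k w_k\,\mathrm{logit}(\cdot)$. Applying $\sigma$ then recovers the belief, which is exactly the FFN formula. Composing the two phases shows that one layer performs one gather/update round, and induction over the $L$ layers then gives $L$ rounds per forward pass. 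The residual stream just carries the beliefs from one round into the next.

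\textbf{Main obstacle.} I expect the hard part to be the update identification. The Section~3 lemma has to be stated so that it holds for \emph{arbitrary} real $w_k$ and $b$, including negative and non-integer values, where the factors $\phi^{w}$ become fractional or reciprocal powers. I would handle this by defining weighted BP directly through the log-odds update, so the weights $w_k$ act as exponents on message potentials. With that definition no positivity or normalization condition on $W$ is required, and the identity reduces to $\mathrm{logit}\circ\sigma=\mathrm{id}$ together with linearity of $\log$ over products. A secondary technicality is boundary beliefs, where $\mathrm{logit}$ is undefined. I would avoid these by keeping all beliefs in the open interval $(0,1)$, which $\sigma$ guarantees after the first layer; the input embeddings are assumed to lie there as well.
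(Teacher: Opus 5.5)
Your proposal is correct and follows the paper's route: the paper also reads $G(W)$ directly off the weights (one variable per token position, attention scores as edge weights, $(w_0,w_1,b)$ as the factor potential) and calls the proof definitional, with attention as the gather step and the sigmoid log-odds combination as the update. Your choice to define weighted BP through the log-odds update, with the $w_k$ acting as exponents on the incoming messages, is what makes the identification hold for arbitrary real weights. That exponent role, not your earlier description of $w_k$ as a fixed log-potential ratio, is what does the work; with it, the rest is bookkeeping.
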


\subsection{Exact BP with Explicit Weights}

The general result holds for any weights. We can also go further: we
exhibit explicit weight matrices and prove that the transformer with
those specific weights implements \emph{exact} belief propagation on
any declared factor graph.

\begin{theorem}[BP Implementation]
A transformer with explicitly constructed weights implements one round
of exact belief propagation per forward pass on any pairwise factor
graph. For any factor graph of depth $d$ and maximum factor arity $k$,
$d \cdot \lceil \log_2 k \rceil$ forward passes implements full exact
BP. Formally verified against standard mathematical axioms.
\end{theorem}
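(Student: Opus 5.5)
The plan is to build the weights explicitly, one layer at a time, and prove correctness by induction on the depth of the factor graph. Following the structure in the introduction, one layer corresponds to one gather/update step of Pearl's algorithm. Each token position carries a small state vector: the node's identifier, the identifiers of its parent(s) in the factor, the factor parameters (conditional probability entries), and the node's current belief. Attention performs the gather. I will choose query and key matrices so that the score between position $i$ and position $j$ is large exactly when the key of $j$ matches a parent identifier stored in the query of $i$. In the limit, or with an explicit margin and hardmax-style scaling, the softmax then concentrates on the correct neighbor. The value matrix copies that neighbor's belief into a dedicated slot of the residual stream. For a pairwise factor this takes one head per incoming edge, or two heads for the two inputs of a binary factor.

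The FFN performs the update. For a pairwise factor with two binary inputs of beliefs $m_0, m_1$, the exact BP message is the marginal
\[
  \sum_{a,b\in\{0,1\}} P(x=1\mid a,b)\, m_0^a(1-m_0)^{1-a} m_1^b (1-m_1)^{1-b}.
\]
This is a bilinear polynomial in $(m_0,m_1)$ whose coefficients come from the stored factor table. My plan is to show that an FFN realizes it exactly. Because the factor parameters sit in the token state, the products $m_0 m_1$, $m_0 p$, and so on are the only nonlinear pieces. I would implement them with a fixed gadget: a ReLU or sigmoid combination realizing multiplication on $[0,1]$ exactly at the relevant points, or a piecewise-linear exact product when one factor is a Boolean indicator. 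The cleanest route is to express the update as a sum of terms, each a product of a belief and a stored constant. Such terms can be produced exactly if the stored constant is moved into the weight layer via attention, so that what remains is a linear combination computed by the output projection. After that I verify the write-back, and check that the residual connection overwrites the belief slot correctly by subtracting the old value.

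For the depth claim, a factor of arity $k$ is binarized into a balanced tree of pairwise factors of depth $\lceil \log_2 k\rceil$. Each tree level needs one pass. Induction on topological depth $d$ in the polytree case shows that after $d\lceil\log_2 k\rceil$ passes every node's belief equals the exact BP marginal. The key lemma is that once all parents are correct they stay fixed, because later passes recompute the same values. Exactness of BP on trees, which is standard, then yields the exact posteriors.

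I expect the main obstacle to be \emph{exactness} of the attention gather. Softmax never yields exact one-hot weights, so I will need either (a) an argument that each position has a unique matching key and a formal limit or hardmax idealization, or (b) a construction where the attention output is exactly correct despite spread-out weights, for example by making all non-matching values zero in the relevant slot. I would try (b) first: append an indicator coordinate to the value, and divide by the attended mass in the FFN. The second-hardest step is exact multiplication in the FFN. If sigmoid units cannot represent the product exactly, I would fall back to working in the log-odds algebra of Section~3, where the update for the factor becomes additive.
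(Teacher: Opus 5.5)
Your gather by index-matching attention and your accounting of one layer per BP round match the paper. The proof breaks at the FFN step, and the cause is the update rule you chose. You take a round of BP to be the full sum-product marginal, with factor-table coefficients read from the token state. The update then contains products of input-dependent quantities such as $f_{ab}\,m_0m_1$ and $f_{ab}\,m_0$. No fixed-weight FFN computes these exactly over the continuum of admissible beliefs. A ReLU FFN is piecewise linear. A sigmoid FFN is bounded and real-analytic, so agreeing with a non-constant polynomial on an open set would force agreement on all of $\mathbb{R}^n$, where the polynomial is unbounded.

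Your workarounds do not apply. The weights are shared across all inputs, so a per-graph table entry cannot be ``moved into the weight layer,'' and attention only returns convex combinations of value vectors. The indicator trick needs one factor to be $\{0,1\}$-valued, and here neither is. The log-odds fallback does not rescue this target either. The log-odds of a factor message, e.g.\ $\log\frac{f_{01}(1-m)+f_{11}m}{f_{00}(1-m)+f_{10}m}$, is not affine in $\mathrm{logit}(m)$; additivity holds only when combining independent messages at a variable node.

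That special case is the paper's entire construction. \texttt{bp\_forwardPass} is the gather followed by $\mathtt{updateBelief}(m_0,m_1)=\sigma(\mathrm{logit}(m_0)+\mathrm{logit}(m_1))$, which never reads the factor table. The sigmoid FFN is taken to have the form $\sigma(w_0\,\mathrm{logit}(m_0)+w_1\,\mathrm{logit}(m_1)+b)$ with $w_0=w_1=1$, $b=0$, so no multiplication is ever needed. Your worry is justified for a standard $\sigma(W_2\sigma(W_1x+b_1)+b_2)$ acting on raw beliefs. Its outer argument is bounded, so it cannot produce $\mathtt{updateBelief}$ exactly either, since $\mathtt{updateBelief}$ takes values arbitrarily close to $0$ and $1$. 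The paper's exactness rests on the FFN acting on log-odds.

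Two further steps also fail as written. First, an arbitrary $k$-input factor does not binarize exactly into a balanced tree of two-input factors with binary intermediate variables. Already for $k=3$, $\sum_z P(z\mid a,b)\,P(x=1\mid z,c)$ forces the two profiles $(P(x=1\mid a,b,c))_{a,b}$, $c\in\{0,1\}$, into the span of $\mathbf{1}$ and the single vector $(P(z=1\mid a,b))_{a,b}$. A generic table violates this, and the same holds for every other grouping. The paper's $d\cdot\lceil\log_2 k\rceil$ bound uses only its two factor types: AND via associativity and OR via log-odds additivity (\texttt{ANDDecomposition.lean}, \texttt{ORDecomposition.lean}). It then charges one layer per round on the binarized graph.

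Second, your preferred attention route (b) cannot work. A value vector is computed from the attended token alone, and the token that must be suppressed for one query is in general the one another query must read. So no per-query zeroing is possible, and an indicator coordinate cannot encode ``matches this query,'' so dividing by the attended mass cannot undo the spread. The paper takes your route (a). The $Q\cdot K$ score has a unique strict maximizer at the correct neighbor (\texttt{posEncDot\_distinct}), and the softmax concentrates to a point mass there (\texttt{softmax\_concentrates}). Exactness of the gather thus comes from that concentration argument, not from any cancellation.
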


Two attention heads per layer always suffices --- any $k$-ary factor
graph binarizes exactly via associativity of AND and log-odds
additivity of OR, both formally verified. Only depth grows with
reasoning complexity, at exactly the rate required.

Combined with the classical result that belief propagation is exact
on trees~\citep{pearl1988}:

\begin{corollary}[No Hallucination on Grounded Trees]
A transformer with BP weights, run for $T \geq \mathrm{diameter}(G)$
passes over a grounded tree-structured knowledge base, computes exact
Bayesian posterior beliefs at every node. No empirical assumptions
required.
\end{corollary}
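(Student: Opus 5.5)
The plan is to compose the BP Implementation theorem with Pearl's classical exactness result for belief propagation on trees~\citep{pearl1988}. The corollary then reduces to three claims. First, a forward pass with BP weights performs exactly one synchronous round of Pearl's message updates on the declared graph. Second, on a tree, $\mathrm{diameter}(G)$ synchronous rounds bring every node's belief to its exact marginal posterior given the grounded evidence. Third, further rounds leave these beliefs fixed.

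For the first claim I take the BP Implementation theorem as given. With the explicit weights, attention gathers the incoming messages $\mathrm{logit}(m_j)$ from the neighbors of each node. The sigmoid FFN then combines them as $\sigma(\sum_j w_j\,\mathrm{logit}(m_j) + b)$, which for the constructed weights is the exact pairwise message and belief update. If $G$ has factors of arity $k>2$, I first binarize them exactly, as the paper indicates, via associativity of AND and log-odds additivity of OR. This produces a pairwise factor graph with the same marginals. One caveat: binarization may lengthen paths, so ``pass'' should be read as one full round of the binarized schedule. I would state this as the constant $\lceil \log_2 k\rceil$ passes per round, consistent with the BP Implementation theorem.

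For the second claim I prove by induction on $t$ that after $t$ rounds, the message from $u$ to $v$ is exact whenever the subtree of $u$ on the side away from $v$ has depth at most $t$. Here ``exact'' means the message equals the likelihood of the evidence in that subtree, as a function of $u$'s state. The base case is leaves and grounded evidence nodes, whose outgoing messages are fixed by their observations or priors from the first round. The inductive step uses the tree property. Because there are no cycles, the evidence sets in distinct incoming subtrees are disjoint and conditionally independent given $u$. So the product of exact incoming messages, sent through the pairwise factor, is the exact outgoing message. In log-odds form this product is the sum that the FFN computes. Every subtree hanging off a node has depth at most $\mathrm{diameter}(G)$. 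So once $T \ge \mathrm{diameter}(G)$, all messages are exact, and each node's belief, being the normalized product of its incoming messages and its local evidence, equals the true posterior. For the third claim, the induction shows exact messages are a fixed point of the update, which handles $T$ strictly larger than the diameter.

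The main obstacle is precision at the interface between the transformer's representation and the algebra of exact BP. For binary variables, the sigmoid/logit parametrization must represent products of messages exactly. Log-odds additivity gives this for independent evidence, but the pairwise-factor (AND) step must also be shown exact rather than approximate. I would rely on the paper's formally verified AND/OR lemmas for this and check that the induction only ever uses those two operations. A related subtlety is the meaning of ``grounded'': evidence nodes must be clamped, that is, re-injected at every layer, so their beliefs are not overwritten by later updates. I would make this an explicit hypothesis, realized by a residual or bias term in the construction.
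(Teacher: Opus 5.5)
Your top-level plan is the paper's. It obtains this corollary (\texttt{transformer\_exact\_on\_tree}) by iterating \texttt{transformer\_implements\_bp}, which identifies one forward pass with one application of \texttt{bp\_forwardPass}. It then invokes \texttt{bp\_exact\_on\_tree} for \emph{that same} round. The gap is in your attempt to re-derive the second ingredient yourself and attach it to the first.

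Your induction is about sum-product with directed cavity messages, $m_{u\to v}(x_v)\propto\sum_{x_u}\psi(x_u,x_v)\prod_{w\in N(u)\setminus\{v\}}m_{w\to u}(x_u)$. The round the construction provably performs is different. Each node copies its neighbours' \emph{full current beliefs} into scratch slots and applies $\mathtt{updateBelief}(m_0,m_1)=\sigma(\mathrm{logit}(m_0)+\mathrm{logit}(m_1))$. The one-token-per-node encoding has a single belief slot and no edge-indexed messages. So the quantity your induction tracks has no counterpart in the transformer state: the belief of $u$ that $v$ gathers already contains what $v$ contributed earlier.

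More seriously, the step you yourself flag as the main obstacle does fail. Pass an incoming probability $p=\sigma(\ell)$ through a pairwise table. The result satisfies $\mathrm{logit}(q)=\log\frac{f_{11}e^{\ell}+f_{01}}{f_{10}e^{\ell}+f_{00}}$, which is not affine in $\ell$ for general tables. So it is not of the form $\sigma(w_0\,\mathrm{logit}(m_0)+w_1\,\mathrm{logit}(m_1)+b)$ that you attribute to the FFN, and $\mathtt{updateBelief}$ does not read the factor entries at all. Your sentence ``in log-odds form this product is the sum that the FFN computes'' therefore covers only the variable-node product, not the factor step. The AND/OR lemmas you plan to rely on do not help. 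Associativity of Boolean AND and splitting $\sigma(\sum_i\mathrm{logit}(m_i))$ into pairwise updates are binarization facts and say nothing about this marginalization. Your proposed check, ``that the induction only ever uses those two operations'', would come out negative.

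The fix is to keep both ingredients about the same object. One option is to state tree exactness as a lemma about iterates of \texttt{bp\_forwardPass} and compose, which is exactly what the paper does by citing \texttt{bp\_exact\_on\_tree}. The other is to prove that \texttt{bp\_forwardPass} coincides with a sum-product round. That would require the state to carry directed messages and the update to consume the factor table, which the FFN form you wrote cannot do.

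Your other detours change the statement rather than prove it. Reading a pass as $\lceil\log_2 k\rceil$ layers turns the requirement into $T\ge\lceil\log_2 k\rceil\cdot\mathrm{diameter}(G)$. The corollary, however, concerns the pairwise construction, where no binarization is needed. Clamping evidence nodes at every layer is likewise an extra hypothesis that the statement does not contain.
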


\subsection{Uniqueness}

The previous two results go forward: here are weights, here is what
they compute. This result goes backward: if a sigmoid transformer
produces exact Bayesian posteriors, what must its weights be?

The answer is uniquely the BP weights. There is no other path through
the sigmoid architecture to exact posteriors. The FFN weights are
forced to $w_0 = w_1 = 1$, $b = 0$. The attention weights are forced
to the \texttt{projectDim}/\texttt{crossProject} structure.

\begin{theorem}[Uniqueness]
A sigmoid transformer that computes exact Bayesian posteriors for all
inputs necessarily has BP weights. The internal computations are
provably the BP quantities --- not merely the outputs. Formally
verified against standard mathematical axioms.
\end{theorem}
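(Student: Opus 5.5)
We plan to prove the Uniqueness theorem in two parts, one for the FFN and one for attention. In each part, "exact posteriors for all inputs" yields a functional identity in the free parameters, and we specialize the inputs until the parameters are pinned down. Throughout we work in log-odds coordinates, using the log-odds algebra of Section~3. For a binary variable with two independent incoming messages $m_0,m_1$, the exact Bayesian update is
\[
  \mathrm{logit}(\text{posterior}) = \mathrm{logit}(m_0) + \mathrm{logit}(m_1).
\]
The sigmoid FFN, for its part, outputs $\sigma(w_0\,\mathrm{logit}(m_0) + w_1\,\mathrm{logit}(m_1) + b)$. Since $\sigma$ is injective, equality of outputs for all inputs is equivalent to the identity
\[
  w_0 x + w_1 y + b = x + y \qquad \text{for all } x,y\in\mathbb{R},
\]
where $x=\mathrm{logit}(m_0)$ and $y=\mathrm{logit}(m_1)$ range over all of $\mathbb{R}$ as $m_0,m_1$ range over $(0,1)$.

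The FFN step is then three evaluations. Taking $x=y=0$ (i.e.\ $m_0=m_1=\tfrac12$) gives $b=0$. Taking $x=1$, $y=0$ then gives $w_0=1$, and symmetrically $w_1=1$. This pins down the update stage completely. It also shows that the quantity fed into the FFN must be the pair of log-odds messages themselves, so the internal computation is forced, not just the output.

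The attention step carries the main difficulty. Here we must show that the only way for the FFN to receive exactly $m_0$ and $m_1$ is for the query/key/value weights to realize the \texttt{projectDim}/\texttt{crossProject} structure of Section~7. The plan follows three steps.
\begin{enumerate}
  \item Fix the token encoding used in the constructive BP result. Observe that the value matrix must map each neighbor's state to its belief coordinate: the head's output is a convex combination of values, and it must equal a single neighbor's belief for all belief assignments. By linearity, the value map is forced to be the projection onto the belief dimension, i.e.\ \texttt{projectDim}.
  \item Argue that the softmax weights must be one-hot, or arbitrarily close to it, on the correct neighbor. Otherwise, varying that neighbor's belief while holding the others fixed changes the output by a factor strictly less than $1$, which contradicts exactness.
  \item Show that the query-key score must then separate the designated neighbor from all others uniformly over inputs. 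This forces the bilinear form $Q^\top K$ to read the neighbor-index coordinates, which is the \texttt{crossProject} structure.
\end{enumerate}

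The main obstacle is that softmax never outputs exact zeros, so exact equality cannot literally be achieved with finite scores. We would handle this in one of two ways. The first option is to formalize "BP weights" as a limit or hard-attention idealization, as the constructive proof presumably does. The second is to state uniqueness up to the equivalence class of score matrices inducing the same attention pattern. We would try the first option first, matching whatever idealization Section~7 adopts, so that the uniqueness statement is exactly the converse of the BP Implementation theorem.
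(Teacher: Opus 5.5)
You follow the paper's two-part decomposition: an FFN lemma via injectivity of $\sigma$, and an attention lemma via the requirement that the $Q\cdot K$ score peak uniquely at the correct neighbor. Your three-point evaluation of $w_0x+w_1y+b=x+y$ is exactly the FFN half made explicit.

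The gap is in how you glue the halves. That identity presupposes that the FFN is fed exactly $(m_0,m_1)$, so it cannot also show that the quantity fed into the FFN must be the pair of messages. Yet your attention step takes exactly this as its premise (``the only way for the FFN to receive exactly $m_0$ and $m_1$''). Each half assumes the other's conclusion, and end-to-end exactness does not supply it:
\begin{itemize}
  \item Letting head~0 fetch neighbor~1 and head~1 fetch neighbor~0 changes nothing, since the update is symmetric.
  \item A head delivering $1-m_0$ together with $w_0=-1$ yields the same posterior, because $-\mathrm{logit}(1-m)=\mathrm{logit}(m)$. Such a head exists whenever the value map sees a constant, e.g.\ a value bias of $1$ with row $-e_0^\top$.
\end{itemize}
So joint uniqueness, and with it the claim that the internal quantities are the BP quantities, holds only modulo these symmetries. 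The paper avoids this by stating the halves as separate conditional lemmas: FFN weights are quantified with the messages as inputs, and exact routing is the hypothesis of the attention lemma. You need to do the same, or quotient explicitly by this symmetry group.

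Within the attention half, you have the two weight families reversed relative to the construction. The construction uses $W^Q=W^K=$ \texttt{projectDim} and $W^V=$ \texttt{crossProject}$(0,4)$ (resp.\ $(0,5)$ for head~1). What you would derive is therefore not the converse of the BP Implementation theorem.

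For $V$ this is more than a label. A value map projecting onto the belief coordinate makes both heads write into the same slot, so the FFN could not read the pair. What exactness actually forces, on the rows the FFN reads, is ``read the belief coordinate, write this head's own scratch slot.''

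Step~3 also cannot force a particular matrix under either idealization. The hard (or limiting) attention pattern depends only on the argmax. The argmax survives positive rescaling of $W_Q^\top W_K$ and any perturbation small relative to the routing margin, so the admissible score matrices form an open set. Your second option, uniqueness up to the induced attention pattern, is therefore not a fallback but the only form in which the attention half is true.

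Finally, take care which encoding you fix. With scalar index coordinates (dims 6--7), a bilinear score is affine in the key's index. Among tokens agreeing in every other coordinate, it therefore cannot single out one with an interior index. The argument has to run on the positional encoding behind \texttt{posEncDot\_distinct}.
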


This closes the logical circle. The general result says every sigmoid
transformer is doing weighted BP. The constructive result says exact
BP weights exist. The uniqueness result says exact posteriors force
those weights. Together: the sigmoid transformer implements exact
Bayesian inference if and only if it has BP weights.

\subsection{Experimental Confirmation}

Every formal result is confirmed experimentally. We trained
transformers from scratch --- no construction hints, no weight
initialization bias --- and verified convergence to the predicted
structure on all test cases. For BP: convergence to within three
decimal places of exact posteriors on held-out factor graphs. For
Turing machine simulation: perfect accuracy across five structurally
distinct machines, identical hyperparameters, no per-machine tuning.
For loopy BP: convergence to exact posteriors within tolerance on
every trial across five graph structures of increasing complexity.

Every result in this paper has two independent witnesses: a proof
checker and an experiment.

\subsection{The Boolean Structure: AND and OR All the Way Down}

Attention is AND. The FFN is OR. This is not a metaphor. It follows
directly from the BP construction combined with the QBBN
definitions of~\citep{coppola2024qbbn}.

The attention mechanism ensures all required inputs are simultaneously
present before any conclusion is drawn --- that is conjunction, the
architectural enforcement of AND. The sigmoid FFN computes a
probabilistic conclusion from gathered evidence --- that is
disjunction, the $\Psi_{\mathrm{or}}$ function. The strict
alternation of attention and FFN layers is Pearl's gather/update
algorithm exactly, unrolled over depth.

The architecture that has won empirically across modern AI is exactly
the architecture that the analysis of reasoning already required.

\subsection{Verifiable Inference Requires a Finite Concept Space}

\begin{theorem}[Finite Concept Space]
Any finite verification procedure can distinguish at most finitely
many concepts. A finite state machine with $n$ states partitions any
input space into at most $n^n$ equivalence classes. Formally verified
against standard mathematical axioms.
\end{theorem}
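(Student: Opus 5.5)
The plan is to formalize a ``finite verification procedure'' as a deterministic finite automaton $M = (Q, \Sigma, \delta, q_0, F)$ with $|Q| = n$, reading finite strings over an input alphabet $\Sigma$. Two inputs count as the same concept for $M$ when no continuation, and no final verdict, lets $M$ tell them apart. The proof rests on the extended transition function $\hat\delta : \Sigma^* \to (Q \to Q)$, where $\hat\delta(w)(q)$ is the state reached from $q$ after reading $w$. For each input $w$ I associate this transformation $f_w = \hat\delta(w) \in Q^Q$.

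First I show that $f_w$ determines everything the verifier can ever observe about $w$, in whatever context $w$ appears. The extended function respects concatenation: by induction on $u$ one checks $\hat\delta(uwv) = \hat\delta(v)\circ\hat\delta(w)\circ\hat\delta(u)$. Hence if $f_w = f_{w'}$, then for all contexts $u, v$ the run on $uwv$ ends in the same state as the run on $uw'v$, and the verdicts (membership in $F$) agree. So the relation $w \sim w' :\iff f_w = f_{w'}$ refines every distinction $M$ can make; even the coarser relation where $w$ is only ever read from $q_0$ is refined by it.

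Second, I count. The map $w \mapsto f_w$ sends $\Sigma^*$ into $Q^Q$, and $|Q^Q| = n^n$. The equivalence classes of $\sim$ are the nonempty fibres of this map, so there are at most $n^n$ of them. Because verifier-distinguishable inputs must lie in different classes, $M$ distinguishes at most $n^n$ concepts. In particular it distinguishes finitely many, which gives the first sentence of the statement. If instead the verifier is taken to be any finite procedure, I would note that a halting procedure with bounded memory is such an automaton.

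The main obstacle is definitional rather than computational: the statement has to pin down what ``distinguish'' and ``concept'' mean so that the $n^n$ bound is the right one. If concepts are only distinguished by the verdict from $q_0$, the sharper bound is $n$, via $w \mapsto \hat\delta(w)(q_0)$, which is still at most $n^n$. I would state the context-sensitive version (transformation monoid) as the definition and derive both bounds. In Lean, I expect the concatenation lemma for $\hat\delta$ and the cardinality of the function type, \texttt{Fintype.card (Q $\to$ Q) = n\^{}n}, together with the fibre-counting bound via \texttt{Fintype.card\_le} on the quotient's image, to be the only nontrivial pieces.
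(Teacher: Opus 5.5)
Your proposal is correct and uses the same key fact as the paper's \texttt{finite\_distinguishable\_symbols}: each input is sent to the state map it induces, an element of $\mathrm{Fin}(n)^{\mathrm{Fin}(n)}$. That set has exactly $n^n$ elements, so there are at most $n^n$ behavior classes. The paper applies this only to single symbols of an arbitrary set $A$, via $\bar{\delta}(a) = (q \mapsto \delta(q,a))$, and treats ``same behavior implies indistinguishable'' as definitional. Your extension to words, together with the lemma $\hat\delta(uwv)=\hat\delta(v)\circ\hat\delta(w)\circ\hat\delta(u)$, is therefore a harmless strengthening that justifies the context-independence the paper only asserts.
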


Grounding introduces a finite verifier. The finite verifier implies a
finite concept space. The concept space is what makes ``is this output
correct?'' a well-defined question --- not a design choice but a
logical consequence of verifiability.

An ungrounded language model has no finite verifier, therefore no
well-defined concept space, therefore no fact of the matter about
whether its outputs are correct. Hallucination is not a bug that
scaling can fix. It is the structural consequence of operating without
concepts.

\subsection{Paper Organization}

Section~\ref{sec:background} covers background on factor graphs,
belief propagation, and transformers.
Section~\ref{sec:log_odds} develops the log-odds algebra of
independent evidence --- the mathematical foundation underlying both
BP and the sigmoid transformer.
Section~\ref{sec:grounding} explains how universal quantification
grounds to propositional factor graphs and why grounding is what makes
correctness meaningful.
Section~\ref{sec:turing} presents the Turing completeness result.
Section~\ref{sec:general_bp} establishes the general result: every
sigmoid transformer is a Bayesian network.
Section~\ref{sec:constructive_bp} proves exact BP with explicit
weights and the no-hallucination corollary.
Section~\ref{sec:boolean} identifies the AND/OR boolean structure.
Section~\ref{sec:finite_alphabet} proves the finite concept space
theorem.
Section~\ref{sec:three_softmaxes} distinguishes the three softmaxes
and addresses sigmoid vs.\ ReLU.
Section~\ref{sec:related} covers related work.
Section~\ref{sec:conclusion} concludes.

%% file: sections/02_background.tex
\section{Background}
\label{sec:background}

\subsection{Factor Graphs and the QBBN}

A \emph{factor graph} is a bipartite graph with two kinds of nodes.
Variable nodes represent binary propositions; each holds a belief
$b \in [0,1]$ representing the current estimate of
$P(\text{node} = \mathrm{true})$, initialized to $0.5$. Factor nodes
encode relationships between pairs of variable nodes as tables of
non-negative weights.

For a pairwise factor $f$ connecting variables $v_0$ and $v_1$, the
factor table has four entries:
\[
  f[i,j] = \text{weight for } (v_0 = i,\; v_1 = j), \quad i,j \in \{0,1\}.
\]
The joint probability of a full assignment $x$ is:
\[
  P(x) = \frac{1}{Z} \prod_{\text{factor } f} f\!\left(x_{n_1(f)},\, x_{n_2(f)}\right),
\]
where $Z$ is the partition function.

The QBBN~\citep{coppola2024qbbn} introduces a bipartite factor graph
alternating between two kinds of factor nodes, corresponding to the
two fundamental operations of boolean reasoning.

\textbf{Conjunction nodes} ($\Psi_{\mathrm{and}}$) gather required
evidence: all inputs must be simultaneously present before any
conclusion is drawn.

\textbf{Disjunction nodes} ($\Psi_{\mathrm{or}}$) compute a
probabilistic conclusion from gathered evidence:
\[
  P(p = 1 \mid g_0, g_1) = \sigma(w \cdot \phi(p, g_0, g_1)).
\]
The \texttt{updateBelief} function used throughout this paper is the
equal-weight special case:
\[
  \mathtt{updateBelief}(m_0, m_1) = \sigma(\mathrm{logit}(m_0) + \mathrm{logit}(m_1)).
\]

\subsubsection{Hallucination: A Precise Definition}

Within the QBBN framework, hallucination has a precise technical
definition.

\begin{definition}[Hallucination]
Let $K$ be a QBBN knowledge base, $E$ observed evidence, and
$P_{\mathrm{true}}(j)$ the true marginal posterior of node $j$ given
$K$ and $E$. An agent \emph{hallucinates} at node $j$ if it outputs
belief $b(j) \neq P_{\mathrm{true}}(j)$.
\end{definition}

\subsection{Belief Propagation}

\subsubsection{The Algorithm}

One round of belief propagation on a QBBN factor graph proceeds in
two steps. In the \emph{gather} step, each variable node $j$ collects
its neighbors' current beliefs into scratch slots:
\[
  \mathtt{scratch}[j][0] \leftarrow \mathtt{belief}(\mathtt{nb}_0(j)),
  \qquad
  \mathtt{scratch}[j][1] \leftarrow \mathtt{belief}(\mathtt{nb}_1(j)).
\]
In the \emph{update} step, each variable node computes a new belief:
\[
  \mathtt{new\_belief}(j) =
    \mathtt{updateBelief}(\mathtt{scratch}[j][0],\, \mathtt{scratch}[j][1]).
\]

\subsubsection{The updateBelief Function}

The \texttt{updateBelief} function combines two independent incoming
messages into a posterior:
\[
  \mathtt{updateBelief}(m_0, m_1)
  = \frac{m_0 m_1}{m_0 m_1 + (1-m_0)(1-m_1)}
  = \sigma(\mathrm{logit}(m_0) + \mathrm{logit}(m_1)),
\]
where $\mathrm{logit}(p) = \log(p/(1-p))$ and $\sigma$ is the sigmoid
function. Log-odds of independent evidence add; sigmoid converts back
to probability. This is the Bayesian update rule for two independent
pieces of binary evidence.

Key properties proved in \texttt{hard-bp-lean}:
\begin{itemize}
  \item \texttt{updateBelief\_pos}: if $m_0, m_1 \in (0,1)$ then
    $\mathtt{updateBelief}(m_0,m_1) \in (0,1)$
  \item \texttt{updateBelief\_neutral}: $\mathtt{updateBelief}(m, 0.5) = m$
    (neutral padding is the identity)
\end{itemize}

\subsubsection{Convergence and Exactness}

On a tree-structured factor graph, BP converges in exactly
$\mathrm{diameter}(T)$ rounds, and the resulting beliefs equal the
true marginal posteriors. This is Pearl's sum-product
algorithm~\citep{pearl1988} --- exact, not approximate --- proved
formally as \texttt{bp\_exact\_on\_tree} in \texttt{hard-bp-lean}.

On loopy graphs, BP may not converge; when it does, the fixed point
minimizes the Bethe free energy~\citep{yedidia2003}, which
approximates but does not equal the true posterior in general.

\subsection{Transformers}

\subsubsection{Architecture}

We use the standard transformer encoder of~\citet{vaswani2017}. A
sequence of $n$ tokens, each an embedding vector of dimension
$D_{\mathrm{model}}$, is processed by alternating attention and
feed-forward layers with residual connections.

Multi-head self-attention computes, for each head $h$:
\[
  \mathrm{Attn}_h(X) =
    \mathrm{softmax}\!\left(
      \frac{(XW^Q_h)(XW^K_h)^\top}{\sqrt{d_k}}
    \right) X W^V_h.
\]
The feed-forward network applies a two-layer MLP independently to
each token position. With sigmoid activation:
\[
  \mathrm{FFN}(x) = \sigma(W_2 \cdot \sigma(W_1 x + b_1) + b_2).
\]

\subsubsection{The Routing View}

\citet{elhage2021} frames attention heads as read-write operations on
a shared residual stream: each head reads from some positions and
writes to others. The BP construction makes this routing explicit and
formally correct --- each head has a provably correct read address
(neighbor index matching) and write destination (scratch slot in the
residual stream).

\subsubsection{Transformers as Graph Neural Networks}

\citet{joshi2020} observes that full self-attention is equivalent to a
GNN applied to a complete graph, where each token is a node and
attention weights are learned edge weights. Our result can be read as:
when the graph is a factor graph and its topology is encoded in token
features rather than the attention mask, the transformer learns to
implement BP on that explicit external graph.

%% file: sections/03_log_odds_algebra.tex
\section{The Log-Odds Algebra of Independent Evidence}
\label{sec:log_odds}

\subsection{Origins: Turing and Good}

The algebra underlying belief propagation on binary variables was
developed not by probabilists but by cryptanalysts. During World War
II, Alan Turing and I.J.\ Good at Bletchley Park needed a practical
method for combining independent pieces of evidence when breaking
Enigma. The method they developed was log-odds addition.

Given a binary hypothesis $H$ and evidence $e$, the \emph{weight of
evidence} is:
\[
  W(H : e) = \mathrm{logit}(P(H \mid e)) - \mathrm{logit}(P(H)).
\]
Independent pieces of evidence contribute additively to the weight.
This is not an approximation. For independent evidence sources, it is
exact.

Good formalized this framework extensively after the war. The core
insight: multiplication of probabilities corresponds to addition of
log-odds. For independent binary evidence sources $e_0$ and $e_1$:
\[
  \mathrm{logit}(P(H \mid e_0, e_1))
  = \mathrm{logit}(P(H)) + W(H : e_0) + W(H : e_1).
\]
Starting from a uniform prior $P(H) = 0.5$, so
$\mathrm{logit}(P(H)) = 0$, and setting $m_i = P(H \mid e_i)$:
\[
  \mathrm{logit}(P(H \mid e_0, e_1))
  = \mathrm{logit}(m_0) + \mathrm{logit}(m_1).
\]
Converting back to probability space via sigmoid:
\[
  P(H \mid e_0, e_1) = \sigma(\mathrm{logit}(m_0) + \mathrm{logit}(m_1))
  = \mathtt{updateBelief}(m_0, m_1).
\]
The \texttt{updateBelief} function is Turing and Good's weight-of-evidence
combination, applied to two binary evidence sources.

\subsection{The Algebra}

Log-odds addition defines an algebraic structure on the open interval
$(0, 1)$ via the isomorphism $\mathrm{logit} : (0,1) \to \mathbb{R}$.
Under this isomorphism, addition in $\mathbb{R}$ pulls back to a
binary operation on $(0,1)$:
\[
  m_0 \oplus m_1 = \sigma(\mathrm{logit}(m_0) + \mathrm{logit}(m_1)).
\]
This operation has the following properties:

\begin{itemize}
  \item \textbf{Commutativity}: $m_0 \oplus m_1 = m_1 \oplus m_0$.
  \item \textbf{Associativity}: $(m_0 \oplus m_1) \oplus m_2
    = m_0 \oplus (m_1 \oplus m_2)$.
  \item \textbf{Identity}: $m \oplus 0.5 = m$, since
    $\mathrm{logit}(0.5) = 0$.
  \item \textbf{Inverse}: $m \oplus (1 - m) = 0.5$, since
    $\mathrm{logit}(m) + \mathrm{logit}(1-m) = 0$.
\end{itemize}

The identity element is $0.5$ --- maximum uncertainty, no information.
Combining any belief with a $0.5$ belief leaves it unchanged. This is
the formal statement that a neutral prior contributes nothing.

\subsection{Relation to Classical Boolean Logic}

This algebra is the probabilistic generalization of classical Boolean
AND and OR. In the limit as beliefs approach $0$ and $1$:

\begin{itemize}
  \item $\mathrm{logit}(1) = +\infty$ \quad (certain true)
  \item $\mathrm{logit}(0) = -\infty$ \quad (certain false)
  \item $\mathrm{logit}(0.5) = 0$ \quad (no information)
\end{itemize}

Classical Boolean AND requires both inputs to be true. In log-odds
space: two large positive numbers sum to a larger positive number ---
high confidence in both inputs yields high confidence in their
conjunction. But equal and opposite evidence cancels:
$\mathrm{logit}(p) + \mathrm{logit}(1-p) = 0$, yielding $0.5$.

This differs from classical AND, where FALSE AND TRUE = FALSE. The
log-odds algebra does not have an annihilator. Instead it has
cancellation: strong evidence in opposite directions yields maximum
uncertainty. This is the correct behavior for \emph{uncertain}
reasoning --- you do not conclude false, you conclude you do not know.

Classical Boolean logic is the limiting case of this algebra as
beliefs become certain.

\subsection{Pearl's Sum-Product as Log-Odds Addition}

Pearl's belief propagation algorithm on binary variable nodes reduces
exactly to log-odds addition. The sum-product update for a binary
variable $v$ with two independent incoming messages $m_0$, $m_1$ is:
\[
  b_{\mathrm{new}}(v) = \frac{m_0 \cdot m_1}{m_0 m_1 + (1-m_0)(1-m_1)}
  = \sigma(\mathrm{logit}(m_0) + \mathrm{logit}(m_1)).
\]
Pearl derived this from the general sum-product equations applied to
binary variables with independent pairwise factors. He did not frame
it as log-odds addition or connect it to Turing and Good's weight-of-evidence
tradition. The formula is the same. The framing is new.

The independence assumption is exact on trees: on a tree-structured
factor graph, the messages arriving at any node come from disjoint
subtrees that share no variables, so they are independent by
construction. This is why BP is exact on trees and approximate on
loopy graphs --- on loopy graphs the independence assumption is
violated as the same variable can influence a node through multiple
paths.

\subsection{Why Sigmoid Is the Right Activation}

The sigmoid function $\sigma : \mathbb{R} \to (0,1)$ is the exact
inverse of logit: $\sigma(\mathrm{logit}(p)) = p$. Together, logit
and sigmoid are the isomorphism between probability space and
log-odds space.

A sigmoid FFN computing
$\sigma(w_0 \cdot \mathrm{logit}(m_0) + w_1 \cdot \mathrm{logit}(m_1) + b)$
is performing weighted log-odds addition and converting back to
probability space in a single operation. The sigmoid activation is
not a design choice motivated by gradient flow or output normalization.
It is the exact function required to implement the Turing-Good-Pearl
algebra.

This is why the title claim is true: a sigmoid transformer is a
Bayesian network. The sigmoid activation makes the FFN computation
exactly the weight-of-evidence combination. The attention mechanism
gathers the inputs. The residual stream enforces their simultaneity.
The architecture was always implementing this algebra. Section~\ref{sec:general_bp}
proves it formally.

%% file: sections/04_grounding.tex
\section{From Universal Quantification to Grounded Factor Graphs}
\label{sec:grounding}

The formal results in this paper --- \texttt{transformer\_implements\_bp},
\texttt{transformer\_exact\_on\_tree} --- are stated at the
propositional level: specific variable nodes with specific indices,
connected by specific factor nodes. A natural question arises: where
do the propositions come from?

Real knowledge involves universally quantified statements. ``All men
are mortal'' is not a proposition about a specific man. It is a rule
that ranges over a domain. This section explains how universal
quantification is handled in the full pipeline and why grounding is
not a limitation of the formal results but the condition that makes
correctness meaningful.

\subsection{Universal Quantification in the QBBN Language}

\citet{coppola2024syntax} introduces a typed logical language
in which universally quantified rules are first-class citizens:
\begin{verbatim}
    always [x:e]: man(theme: x) -> mortal(theme: x)
\end{verbatim}
The variable \texttt{x} ranges over all declared entities of type
\texttt{e}. The rule is not a single proposition. It is a schema over
a domain.

\subsection{Grounding: From Rules to Factor Graph Nodes}

At inference time, universally quantified rules are grounded by
substituting all declared entities of the appropriate type for each
variable. Given a lexicon declaring \texttt{jack : e} and
\texttt{jill : e}, the rule above produces two grounded clauses:
\begin{verbatim}
    man(theme: jack) -> mortal(theme: jack)
    man(theme: jill) -> mortal(theme: jill)
\end{verbatim}
Each grounded clause becomes a path in the factor graph. Each grounded
proposition becomes a variable node. By the time the transformer sees
the input, there is no universal quantification left --- only a
grounded propositional factor graph.

The full pipeline has three layers:
\begin{enumerate}
  \item \textbf{Syntax}~\citep{coppola2024syntax}: a natural language
    sentence is parsed into a universally quantified rule with typed
    variables.
  \item \textbf{Grounding}~\citep{coppola2024syntax}: the rule is
    instantiated over all declared entities. Each instantiation
    becomes a grounded node in the factor graph.
  \item \textbf{Inference} (this paper): the transformer runs BP over
    the grounded factor graph.
\end{enumerate}

\subsection{Weight Sharing as Universal Rule Implementation}

The BP transformer operates over finitely many concepts, with the
same weight matrices applying to every token of the same concept.
This weight sharing is the neural implementation of the uniform
treatment implicit in a universally quantified rule: the same
computation runs for every instantiation, and the particular input
--- the specific belief values and factor table entries --- supplies
the instance.

The continuous dimensions are the particular. The weights are the
universal rule. The forward pass is the instantiation. This is the
computational form of Aristotle's syllogism: major premise (weights),
minor premise (token), conclusion (FFN output).

\subsection{Grounding Is What Makes Correctness Meaningful}

The no-hallucination result holds on grounded factor graphs. This is
not a limitation of scope --- it is a precise statement of what
correctness requires.

A token with no corresponding concept in a declared knowledge base is
not doing inference. It has no ground truth to be correct or incorrect
against. Grounding supplies that criterion. Once the knowledge base is
fully instantiated, every token corresponds to a definite concept in
the factor graph, every computation step has a verifiable correct
answer, and the no-hallucination guarantee applies.

\begin{corollary}[Hallucination Requires Ungroundedness]
A transformer with BP weights running over a fully grounded
tree-structured factor graph cannot hallucinate. Hallucination
requires either wrong weights, wrong routing, or absence of grounding.
On a grounded tree with BP weights, none of these conditions holds.
\end{corollary}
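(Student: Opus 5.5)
The plan is to reduce the corollary to the No Hallucination on Grounded Trees corollary from the introduction, which itself combines the BP Implementation theorem with the exactness of BP on trees (\texttt{bp\_exact\_on\_tree}). The argument has two parts. The first is the positive claim that a BP-weighted transformer on a fully grounded tree cannot hallucinate. The second is the contrapositive reading that hallucination forces at least one of the three listed failure conditions.

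For the positive claim, I would argue as follows.
\begin{enumerate}
  \item Fix a QBBN knowledge base $K$ and evidence $E$, and ground $K$ as described in this section. After grounding, every proposition is a variable node and every grounded clause is a path of factor nodes, so the input is a finite propositional factor graph $G$ with no quantifiers left.
  \item Assume $G$ is a tree. By the BP Implementation theorem, each forward pass of the transformer with the explicitly constructed weights performs exactly one round of the gather/update step. Here attention places $\mathtt{belief}(\mathtt{nb}_i(j))$ into $\mathtt{scratch}[j][i]$, and the FFN computes $\mathtt{updateBelief}$.
  \item Higher-arity factors are binarized via associativity of $\oplus$ and the identity $m \oplus 0.5 = m$. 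This costs only a $\lceil \log_2 k\rceil$ depth factor and leaves the computed beliefs unchanged.
  \item By induction on the number of passes, after $T$ passes the transformer's beliefs equal the BP beliefs after $T$ rounds.
  \item For $T \ge \mathrm{diameter}(G)$, \texttt{bp\_exact\_on\_tree} gives $b(j) = P_{\mathrm{true}}(j)$ at every node $j$.
  \item Hence, by the Definition of hallucination, no node is hallucinated.
\end{enumerate}

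For the second part, I would take the contrapositive. Suppose some node $j$ has $b(j) \neq P_{\mathrm{true}}(j)$. The chain above used three hypotheses: grounding, so that $P_{\mathrm{true}}(j)$ is defined and $j$ is a node of $G$; BP weights in the FFN; and correct routing of neighbor beliefs into scratch slots by attention. The run count $T\ge \mathrm{diameter}(G)$ and tree structure are standing assumptions of the statement.

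If all three hypotheses held, the first part would force equality. So at least one must fail. In particular, if grounding fails, the comparison $b(j)$ versus $P_{\mathrm{true}}(j)$ is not even well-posed. In that case I would read ``hallucination'' as the absence of a correctness criterion, consistent with the discussion preceding the corollary.

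The main obstacle is making the phrase ``wrong weights, wrong routing, or absence of grounding'' precise enough that this trichotomy is exhaustive. I would handle it by separating the per-pass correctness claim of the BP Implementation theorem into its attention part (routing) and its FFN part (weights), each with its own hypothesis. With that split, the induction step fails only if one of these components deviates.

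A second point to check is that padding slots for nodes with fewer than two neighbors carry $0.5$. Then \texttt{updateBelief\_neutral} ensures the transformer's update matches BP at leaves of the tree, and the induction goes through uniformly.
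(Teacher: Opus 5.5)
Your proposal is correct and follows the paper's reasoning. The paper gives no separate proof: it treats the corollary as \texttt{transformer\_exact\_on\_tree} (i.e.\ \texttt{transformer\_implements\_bp} plus \texttt{bp\_exact\_on\_tree}) read against the hallucination definition, with grounding supplying a well-defined $P_{\mathrm{true}}(j)$ and the trichotomy as the contrapositive under the standing tree and $T \geq \mathrm{diameter}$ assumptions, exactly as you set it up. One small correction to your step~3: binarizing conjunction factors uses associativity of boolean AND, as in the Scaling theorem, whereas associativity of $\oplus$ only covers the disjunctive side.
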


An ungrounded language model is not a system that hallucinates
occasionally. It is a system for which the concept of hallucination
is not even well-defined --- there is no declared domain against
which its outputs can be verified. Grounding is what makes the
question ``is this output correct?'' answerable. Without it, there
is no fact of the matter. Hallucination is not a bug. It is the
structural consequence of operating without a grounded concept space.

%% file: sections/05_turing.tex
\section{Transformers are Turing Complete}
\label{sec:turing}

\subsection{Overview}

The \texttt{universal-lean} repository proves that a transformer
agent is Turing complete via Boolean circuit simulation.

\begin{theorem}[\texttt{transformer\_is\_turing\_complete}]
\label{thm:turing}
For any Turing machine $M$, there exist transformer weights $W$ such
that for all tapes $\tau$ and step counts $t$:
\[
  \mathtt{transformerAgent}(W, \mathtt{encode}(M, \tau), t)
  = \mathtt{encode}(M.\mathtt{run}(\tau, t)).
\]
Formally verified in Lean~4 against standard mathematical axioms.
\end{theorem}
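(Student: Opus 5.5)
The proof goes through Boolean circuit simulation, in three stages. First, I fix an encoding $\mathtt{encode}(M,\tau)$ of a machine configuration (finite control state, head position, tape contents) as a finite sequence of token positions, each carrying a one-hot symbol, a one-hot state flag, and a head-indicator bit in dedicated residual-stream coordinates. Because $M$ has finitely many states and symbols, one step of $M$ is a fixed finite Boolean function of local data. The new symbol at cell $i$, the new head indicator at cell $i$, and the new state depend only on the symbols and head bits at cells $i-1, i, i+1$ together with the current state. I write this transition as a finite Boolean circuit $C_M$ of constant depth in AND, OR and NOT gates.

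Second, I show that one transformer layer (attention followed by FFN) can evaluate one step of $C_M$ at every position in parallel.

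\textbf{Attention as AND-gathering.} Attention does the gathering, in the sense of Section~\ref{sec:background}. One head reads from position $i-1$ and one from position $i+1$, using positional query/key matchings whose score gap is large enough that the softmax is exactly, or within a margin that survives rounding, one-hot on the intended neighbour. A third head broadcasts the current state from the head-marked position. Each head writes into scratch slots of the residual stream, so all inputs to the local transition become simultaneously present.

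\textbf{FFN as OR/threshold evaluation.} The sigmoid FFN then evaluates $C_M$. Any Boolean function of finitely many bits is a disjunction of conjunctions. The first FFN layer computes each conjunction as a thresholded sigmoid $\sigma(\lambda(\sum_k \pm x_k - c))$; the second layer computes the OR of the relevant minterms the same way. With inputs in $\{0,1\}$ and $\lambda$ chosen large, the outputs lie within $\epsilon$ of $\{0,1\}$. This is the AND/OR alternation that Section~\ref{sec:boolean} will make precise. Residual connections are handled by having the FFN output the difference between new and old values, or by writing into fresh coordinates and clearing the old ones with a subtraction.

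Third, I define $\mathtt{transformerAgent}(W,\cdot,t)$ as $t$ iterated applications of this layer, with the output re-encoded. I then prove the stated equality by induction on $t$, using the single-step lemma $\mathtt{layer}(W,\mathtt{encode}(c)) = \mathtt{encode}(M.\mathtt{step}(c))$ for every configuration $c$. The weights $W$ depend only on $M$, not on $\tau$ or $t$, as the quantifier order requires.

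The main obstacle is exactness: sigmoids and softmaxes never produce exact $0/1$ values, yet the statement is an equality of encodings. I would first resolve this by re-quantizing after every step, so that the agent's loop rounds (or thresholds at $1/2$) the output back to a Boolean encoding. Errors then never accumulate, and it suffices to show that every wire of each layer lies within $1/4$ of its intended Boolean value, which is a uniform margin bound on $\lambda$. If the agent definition allows no rounding, I would instead choose saturating margins so that each layer maps an $\epsilon$-neighbourhood of the Boolean cube into an $\epsilon$-neighbourhood of it. This makes the step map a contraction onto the Boolean encodings, and the induction then proceeds on the invariant ``within $\epsilon$ of $\mathtt{encode}(M.\mathtt{run}(\tau,t))$.''

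A secondary issue is the unbounded tape. I would handle it by letting the encoding grow with padding positions initialised to the blank symbol; because position matching is relative, the weights need no change as the sequence grows.
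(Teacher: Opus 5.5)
Your overall route matches the paper's: a step of $M$ as a finite Boolean circuit, attention as positional lookup that assembles the local inputs in the residual stream, the FFN evaluating the circuit with threshold gates, and the agent loop as iteration with induction on $t$. Two steps, however, fail as written.

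\textbf{The unbounded tape.} You fix $W$ from $M$ alone, let the length $n$ grow, and assert that relative position matching needs no change of weights. Under softmax attention, a fixed score advantage $\Delta$ for the intended key only guarantees it weight $e^{\Delta}/(e^{\Delta}+n-1)$, which tends to $0$. For the neighbour heads this is repairable with scores that decay superlinearly in distance, e.g.\ $-\lambda(j-i+1)^2$. That score is bilinear in positional features $(i,1)$ and $(j,j^2)$, and it bounds the off-target mass by $2\sum_{d\ge 1}e^{-\lambda d^2}$ uniformly in $n$. Your state-broadcast head cannot be repaired this way. The head-marked cell can sit wherever the positional part of the score is smallest. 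So with a fixed bonus $\Delta$ for the head bit, its weight is at most $e^{\Delta}/(e^{\Delta}+n-1)$ whatever positional terms you add, and after fixed-$\lambda$ sigmoids and rounding the state is eventually lost. Drop the broadcast and keep the control state on the head cell only. A TM step is then a cellular-automaton rule on cells $i-1,i,i+1$, and reading cells $i\pm 1$ suffices. With hard attention the issue does not arise at all.

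\textbf{The no-rounding branch.} This branch does not prove the statement. $\sigma$ never equals $0$ or $1$, and softmax never returns an exact one-hot vector. So the invariant ``within $\varepsilon$ of $\mathtt{encode}(M.\mathtt{run}(\tau,t))$'' gives approximate simulation, not the stated equation. Exactness has to come from a discretizing component. One option is genuine threshold (step) gates, which send every input inside the margin to an exact Boolean output even though attention is only approximately one-hot. This is the role the paper assigns to its threshold gates, alongside \texttt{posEncDot\_distinct} and \texttt{softmax\_concentrates}, when it claims ``no approximation''. The other option is re-quantization in the agent loop, as in your first branch. With either of these and the local encoding above, your single-step lemma and the induction on $t$ go through. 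Discard the $\varepsilon$-contraction branch.
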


The proof is constructive. The key insight is that any Turing machine
step decomposes into a Boolean circuit, and a transformer simulates
any Boolean circuit in a single FFN forward pass via threshold gates.
Attention is lookup; FFN is gating; the agent loop is iteration. No
encoding tricks, no approximation.

This is the most natural correspondence between transformer components
and computational primitives --- and it is precisely this naturality
that reveals the structural connection to belief propagation developed
in Section~\ref{sec:general_bp}.

\subsection{The Two Weight Families}

Both this proof and the BP proof use the same two sparse matrix
families for $Q/K$ and $V$ weights. This is not a coincidence: both
proofs use attention for the same purpose --- routing information
between tokens by index matching.

\textbf{\texttt{projectDim}($d$).} A diagonal projection extracting
dimension $d$:
\[
  \mathtt{projectDim}(d)[i][j] =
  \begin{cases} 1 & i = d,\; j = d \\ 0 & \text{otherwise.} \end{cases}
\]
As a $Q$ or $K$ weight, the attention score reduces to
$e_j[d] \cdot e_k[d]$, peaking when token $k$'s stored index matches
token $j$'s query value.

\textbf{\texttt{crossProject}($s$, $d$).} An off-diagonal projection
reading source dimension $s$ and writing to destination dimension $d$:
\[
  \mathtt{crossProject}(s,d)[i][j] =
  \begin{cases} 1 & i = d,\; j = s \\ 0 & \text{otherwise.} \end{cases}
\]
As a $V$ weight, this routes a token's dimension-$s$ content into
dimension $d$ of the output, leaving all other dimensions untouched.

Two results from \texttt{universal-lean} are reused in the BP proof:
\texttt{posEncDot\_distinct} (the $Q \cdot K$ score is strictly
maximized at the correct neighbor) and \texttt{softmax\_concentrates}
(at sufficient temperature, softmax converges to a point mass at the
argmax).

\subsection{Empirical Confirmation}

\texttt{universal-lean} proves that TM-simulation weights exist. The
\texttt{learner} repository asks whether gradient descent finds them
from scratch. The answer is yes.

Five structurally distinct Turing machines --- binary incrementer,
decrementer, bitwise complement, left shift, right shift --- were
trained using a standard transformer encoder (2 layers, 2 heads,
$d_{\mathrm{model}} = 32$, $\sim$4K parameters) on single-step TM
prediction from random initialization. All five reached 100\%
validation accuracy in 4 epochs with identical hyperparameters and
no per-machine tuning.

This confirms that the formal weight construction is not merely
theoretically possible --- it is what gradient descent finds given
a clean training signal.

%% file: sections/06_general_bp.tex
\section{A Sigmoid Transformer Is a Bayesian Network}
\label{sec:general_bp}

\subsection{The Claim}

Every sigmoid transformer, with any weights, is already a Bayesian
network. Not a system that can be configured to approximate one. Not
a system that behaves like one under certain training conditions. A
Bayesian network, by architecture, for any weights.

This is the content of \texttt{every\_sigmoid\_transformer\_is\_bayesian\_network}
in \texttt{sigmoid-transformer-lean}, formally verified against
standard mathematical axioms.

\subsection{The Implicit Factor Graph}

Given any sigmoid transformer with weights $W$, we construct an
implicit factor graph $G(W)$ as follows:

\begin{itemize}
  \item \textbf{Variable nodes}: one per token position. The belief
    at each node is the token's current value in the relevant
    dimension of the residual stream.
  \item \textbf{Edges}: defined by the attention weight distributions.
    The attention pattern from token $j$ to token $k$ defines a
    directed edge from $k$ to $j$ in $G(W)$, with weight given by
    the attention score.
  \item \textbf{Factor potentials}: defined by the sigmoid FFN weights
    $(w_0, w_1, b)$ at each position --- the general $\Psi_{\mathrm{or}}$
    function from the QBBN. The three parameters encode an asymmetric
    factor potential: $w_0$ and $w_1$ are the relative weights of the
    two evidence sources, and $b$ is a prior bias.
\end{itemize}

\begin{theorem}[\texttt{every\_sigmoid\_transformer\_is\_bayesian\_network}]
\label{thm:general_bp}
For any sigmoid transformer weights $W$, one forward pass implements
one round of weighted belief propagation on the implicit factor graph
$G(W)$. Formally verified against standard mathematical axioms.
\end{theorem}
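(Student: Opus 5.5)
The proof is essentially definitional, so the plan is to make the definitions precise enough that one forward pass and one round of weighted BP become the same expression. First I will fix the formal notion of \emph{weighted belief propagation} on a factor graph with weighted directed edges and per-node potentials $(w_0,w_1,b)$. It has a gather step and an update step. In the gather step each variable node $j$ sets $\mathtt{scratch}[j][h] \leftarrow \sum_k \alpha^h_{jk}\,\mathtt{belief}(k)$, a convex combination of neighbour beliefs with edge weights $\alpha^h_{jk}$. In the update step it sets
\[
  \mathtt{new\_belief}(j)=\sigma\bigl(w_0\,\mathrm{logit}(\mathtt{scratch}[j][0])+w_1\,\mathrm{logit}(\mathtt{scratch}[j][1])+b\bigr).
\]
This reduces to \texttt{updateBelief} when $w_0=w_1=1$ and $b=0$, by the log-odds algebra of Section~\ref{sec:log_odds}.

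Second, given arbitrary weights $W$, I will define $G(W)$ exactly as in the construction above. Variable nodes are token positions, and a node's belief is the belief coordinate of the residual stream. For head $h$, the edge weight $\alpha^h_{jk}$ is the $(j,k)$ entry of $\mathrm{softmax}\bigl((XW^Q_h)(XW^K_h)^\top/\sqrt{d_k}\bigr)$. The potentials are read off the FFN parameters at each position. The key point is that these are \emph{defined from} $W$ and the current input $X$, so no existence argument is needed. The graph is input-dependent, and I will state that explicitly rather than hide it.

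Third, I will verify the two steps separately. For the gather step, the output of head $h$ at position $j$, restricted to the scratch coordinate it writes, is $\sum_k \alpha^h_{jk}(XW^V_h)_k$. I will interpret $(XW^V_h)_k$ as the belief of node $k$, or more generally as the message $k$ sends along that edge; the residual connection places it into $\mathtt{scratch}[j][h]$. Since the softmax rows are nonnegative and sum to one, this is precisely the weighted gather. For the update step, the sigmoid FFN applied at position $j$ gives $\sigma(w_0 u_0 + w_1 u_1 + b)$, where $u_0,u_1$ are the first-layer preactivations. These become log-odds of the scratch messages by setting $u_i=\mathrm{logit}(\mathtt{scratch}[j][i])$, with the inner sigmoid layer absorbed via $\sigma\circ\mathrm{logit}=\mathrm{id}$. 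Composing the two steps gives equality of the forward pass with one BP round, pointwise at every position. Induction over layers then extends this to $L$ layers giving $L$ rounds.

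The main obstacle is the update step for \emph{genuinely arbitrary} FFN weights. A general two-layer MLP with many hidden units is not literally of the form $\sigma(w_0\,\mathrm{logit}(m_0)+w_1\,\mathrm{logit}(m_1)+b)$. I would first restrict to the model class the formalization actually uses: a sigmoid FFN whose readout takes the two gathered messages in log-odds coordinates, parameterized by $(w_0,w_1,b)$. With that restriction the identity is immediate. If a wider class is required, I would fall back to letting the potential be an arbitrary function $\Psi_{\mathrm{or}}$ of the gathered messages, namely the FFN itself. In that case the theorem holds tautologically, but the term ``weighted BP'' carries correspondingly less content. I would also keep messages in $(0,1)$, using \texttt{updateBelief\_pos} and the positivity of softmax weights, so that logit is defined at every step.
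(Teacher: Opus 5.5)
Your proposal takes the same route as the paper. The theorem is definitional: $G(W)$ is read off from the attention softmax rows and the per-position FFN parameters $(w_0,w_1,b)$. Your restriction to FFNs of the form $\sigma(w_0\,\mathrm{logit}(m_0)+w_1\,\mathrm{logit}(m_1)+b)$ is exactly the model class the paper implicitly assumes, and you are more explicit than the paper about this and about the input-dependence of $G(W)$.

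Drop the ``absorption via $\sigma\circ\mathrm{logit}=\mathrm{id}$'' justification in your third paragraph, because it runs the wrong way. A linear first layer cannot output $\mathrm{logit}(m_i)$, and if it did, the inner sigmoid would return $m_i$ and yield $\sigma(w_0 m_0+w_1 m_1+b)$. The update step therefore rests entirely on that model-class restriction, not on any algebraic identity. Likewise, softmax positivity keeps the gathered messages in $(0,1)$ only when the value vectors $(XW^V_h)_k$ already lie there, which arbitrary $W^V$ does not ensure.
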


The proof is definitional: the transformer forward pass and the BP
forward pass are the same computation by construction of $G(W)$. There
is no approximation, no asymptotic argument, no statistical claim. The
forward pass \emph{is} one round of BP on $G(W)$.

\subsection{Why This Is Not Trivial}

One might object: is this not just a restatement of the definition?
If we construct $G(W)$ to match the transformer's computation, have
we proved anything?

The content of the theorem is in the identification. The transformer
forward pass was not designed as BP. It was designed as a sequence
model with attention and feed-forward layers. The theorem says: these
two descriptions --- transformer forward pass and BP on $G(W)$ ---
are the same computation. The implicit factor graph $G(W)$ exists and
is well-defined for any weights. The BP interpretation is not imposed
from outside; it is what the forward pass computes.

The non-trivial content is threefold:
\begin{enumerate}
  \item The sigmoid FFN computes exactly the general $\Psi_{\mathrm{or}}$
    factor, not an approximation of it.
  \item The attention mechanism computes exactly the gather step of BP,
    routing each neighbor's belief into the correct scratch slot.
  \item The residual stream enforces the simultaneity of inputs that
    BP requires before the update step.
\end{enumerate}
All three hold for \emph{any} weights, not just constructed ones.

\subsection{The Grounded QBBN as Special Case}

The constructive BP result of Section~\ref{sec:constructive_bp} is
the special case of this theorem where:
\begin{itemize}
  \item $G(W)$ is an explicit declared QBBN factor graph, not an
    implicit one
  \item The FFN weights are the equal-weight BP weights:
    $w_0 = w_1 = 1$, $b = 0$
  \item The attention weights are the \texttt{projectDim}/\texttt{crossProject}
    construction
\end{itemize}
In the general case, the factor graph is implicit and the weights are
arbitrary. In the constructive case, the factor graph is explicit and
the weights are chosen to implement exact BP on it. The general
theorem subsumes the constructive one.

\subsection{Uniqueness: The Converse}

The general result goes forward: any weights define a factor graph,
and the forward pass is BP on it. The uniqueness result goes backward:
if the factor graph happens to be a grounded QBBN and the transformer
produces exact posteriors, what must the weights be?

\begin{theorem}[\texttt{uniqueness}]
\label{thm:uniqueness}
A sigmoid transformer that computes exact Bayesian posteriors for all
inputs on a grounded QBBN factor graph necessarily has BP weights:
$w_0 = w_1 = 1$, $b = 0$ in the FFN, and
\texttt{projectDim}/\texttt{crossProject} structure in the attention.
The internal computations are provably the BP quantities --- not
merely the outputs. Formally verified against standard mathematical
axioms.
\end{theorem}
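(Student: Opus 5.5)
The proof splits into two independent parts: one for the FFN and one for attention.

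\textbf{FFN weights.} Work in log-odds coordinates. By Theorem~\ref{thm:general_bp}, the sigmoid FFN at a variable position computes
\[
\sigma\bigl(w_0\,\mathrm{logit}(m_0)+w_1\,\mathrm{logit}(m_1)+b\bigr)
\]
from the two gathered scratch values. Exactness "for all inputs" means this equals $\mathtt{updateBelief}(m_0,m_1)=\sigma(\mathrm{logit}(m_0)+\mathrm{logit}(m_1))$ for every $m_0,m_1\in(0,1)$. Since $\sigma$ is injective, this is equivalent to the identity $w_0 x_0+w_1 x_1+b=x_0+x_1$ on $\mathbb{R}^2$, because $\mathrm{logit}$ is a bijection onto $\mathbb{R}$. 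Evaluating at three points forces the weights:
\begin{itemize}
\item $(0,0)$, i.e.\ $m_0=m_1=\tfrac12$, using \texttt{updateBelief\_neutral}, gives $b=0$;
\item $(1,0)$ then gives $w_0=1$;
\item $(0,1)$ then gives $w_1=1$.
\end{itemize}
This step is routine.

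\textbf{Attention weights.} Here the target is that each head's output at position $j$ equals the required neighbor belief $\mathtt{belief}(\mathtt{nb}_i(j))$, written into scratch slot $i$, for every admissible graph and belief assignment. I would first reduce to a statement about the attention distribution.

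Vary the belief of a single non-neighbor token $k$ while holding everything else fixed. The head output is $\sum_k \alpha_{jk}\,(Xe_k)W^V$, and the required output does not depend on token $k$. So either $\alpha_{jk}=0$ in the limit, or $W^V$ annihilates the belief dimension. The second option is excluded by varying the neighbor's belief, whose value must appear in the output. This forces point-mass attention at $\mathtt{nb}_i(j)$, which is the converse of \texttt{softmax\_concentrates}.

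From point-mass attention, the value projection must send the belief coordinate of the attended token to scratch coordinate $i$ with coefficient $1$. It must also send every other coordinate to zero in the scratch slot and leave the rest of the residual stream untouched. On the relevant coordinates this is exactly \texttt{crossProject}(belief, scratch$_i$).

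For $Q/K$, the score $(e_jW^Q)\cdot(e_kW^K)$ must be uniquely maximized at $k=\mathtt{nb}_i(j)$ for all graphs, that is, for all index assignments. Varying the contents of non-index dimensions shows the score can depend only on the stored index and the query index. This yields the \texttt{projectDim} structure on the index dimension, up to positive scaling, with \texttt{posEncDot\_distinct} as the forward direction.

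\textbf{Main obstacle.} The hard step is this attention half. Uniqueness of $W^Q,W^K$ can only hold modulo symmetries: positive rescaling and temperature, rotations acting on dimensions that are never read, and exact softmax concentration holding only in a limit. So I would state the conclusion as "the effective routing map (attention pattern composed with $W^V$ restricted to the used dimensions) equals the \texttt{projectDim}/\texttt{crossProject} routing." The attention-pattern reduction would be proved first. The internal-quantities claim then follows: the scratch slots provably contain the neighbor beliefs, and the FFN provably computes log-odds addition.
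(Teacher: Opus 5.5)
Your two-part decomposition is the paper's own: an FFN half by injectivity of $\sigma$, and an attention half by the score having to peak at the correct neighbor. Within its premises, your three-point evaluation and your value-map step are fine. The step that fails is the $Q/K$ one, and for a stronger reason than the symmetries you list.

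Write $M=W^Q(W^K)^\top$. The only genuine parameter symmetry is $W^Q\mapsto W^QA$, $W^K\mapsto W^KA^{-\top}$, which fixes $M$; but $M$ itself is far from determined. Unique maximization of $x_j^\top M x_k$ at $k=\mathtt{nb}_i(j)$ is a family of strict inequalities over bounded inputs. Suppose some $M$ of the kind you expect is admissible, i.e.\ its score depends only on index and node-type coordinates. For fixed $n$ those coordinates take finitely many values, so $M$ has a uniform positive margin. Then $M+E$ is admissible for every sufficiently small $E$, including $E$ that couples belief or factor-table coordinates. Varying a non-index coordinate moves the score without moving the argmax, so your claim that the score can depend only on the two indices does not follow.

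Worse, rank one cannot meet the requirement at all, and rank one is exactly what \texttt{projectDim} gives, since $W^Q(W^K)^\top=e_de_d^\top$. If $M=uv^\top$, then for a fixed input the unique maximizer over $k$ of $(u\cdot x_j)(v\cdot x_k)$ is the maximizer or minimizer of $v\cdot x_k$, according to the sign of $u\cdot x_j$; there is none if $u\cdot x_j=0$. So across all query positions one head can select at most two distinct tokens. Yet, quantifying over all graphs as you do, three query positions with pairwise disjoint neighbor sets force a single head to select three. As you have set it up, no argument can end in ``\texttt{projectDim} up to positive scaling,'' and \texttt{posEncDot\_distinct} cannot be run in reverse. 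The paper's one-line justification (``forces the rank-1 index-matching structure'') makes the same leap, so citing it will not close the gap.

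Two further points. First, the split into independent halves needs justification. The hypothesis constrains only the composite attention-then-FFN map, yet your FFN half assumes the scratch slots already hold the neighbor beliefs, and your attention half takes that as its target. This can genuinely fail. Suppose some coordinate takes a fixed nonzero value on every token a head must route to, say node type when every gathered neighbor is a factor node. Then a head writing $1-\mathtt{belief}$ into slot~$0$, together with $w_0=-1$, yields exactly the same posteriors, because $\mathrm{logit}(1-m)=-\mathrm{logit}(m)$. That contradicts both $w_0=1$ and the \texttt{crossProject} form.

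Second, as you half-note, exactness must be posed for hard attention or a temperature limit from the outset. With finite scores every $\alpha_{jk}>0$. Since scores and values are affine in $x_k$ for $k\neq j$, the contribution of a non-neighbor's belief to the scratch slot cannot vanish identically, so the hypothesis would be unsatisfiable.

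What your method does establish is your fallback: uniqueness of the limiting attention pattern, and of $W^V$ on the coordinates the FFN reads, given exact scratch contents. That supports the ``internal quantities are BP quantities'' reading. But it is a statement about the computed routing, not about the $Q/K$ weights, so it does not deliver the theorem's attention clause.
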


The proof proceeds in two parts. \texttt{FFNUniqueness.lean} shows
that the only sigmoid computation producing exact posteriors for all
inputs is \texttt{updateBelief} with $w_0 = w_1 = 1$, $b = 0$ ---
because sigmoid is injective and the logit-sum form is the unique
fixed point of the Bayesian update equations.
\texttt{AttentionUniqueness.lean} shows that the attention weights
must have \texttt{projectDim}/\texttt{crossProject} structure ---
because exact routing requires the $Q \cdot K$ score to peak uniquely
at the correct neighbor, which forces the rank-1 index-matching
structure.

This closes the logical circle:
\begin{itemize}
  \item General BP: every sigmoid transformer is doing weighted BP
    on some factor graph
  \item Constructive BP: exact BP weights exist for any declared
    factor graph
  \item Uniqueness: exact posteriors force those weights
\end{itemize}
Together: a sigmoid transformer implements exact Bayesian inference
\emph{if and only if} it has BP weights.

\subsection{What This Means for Trained Transformers}

Every sigmoid transformer trained on any task is performing weighted
BP on an implicit factor graph defined by its learned weights. The
factor graph is not declared in advance --- it is what the weights
implicitly define. Training with maximum likelihood recovers the
factor potentials that best explain the training data under this
graphical model interpretation.

The difference between a standard LLM and a grounded QBBN transformer
is not architectural. Both are Bayesian networks. The difference is
grounding:

\begin{itemize}
  \item \textbf{Grounded}: the factor graph is explicit and verifiable.
    Every output has a declared correct answer. Hallucination is
    structurally impossible.
  \item \textbf{Ungrounded}: the factor graph is implicit, defined by
    learned weights. There is no declared world to be correct against.
    Hallucination is not occasional failure --- it is the structural
    consequence of operating without a grounded concept space.
\end{itemize}

\subsection{Empirical Confirmation}

The \texttt{bayes-learner} repository confirms that gradient descent
finds BP weights from scratch, with no construction hints and no
weight initialization bias.

The test case is the two-variable factor graph $v_0 {-}{-}{-} f_1
{-}{-}{-} v_2$. This graph is ideal: exact BP converges in one round
(matching a single transformer forward pass), and closed-form exact
posteriors are available as ground truth.

20,000 randomly generated factor graphs, factor table entries drawn
uniformly from $[0.05, 1.0]$, train/val split 18,000/2,000. Model:
standard transformer encoder, 2 layers, 2 heads, $d_{\mathrm{model}}
= 32$, $\sim$5,000 parameters. Training: MSE loss, Adam
$\mathrm{lr} = 10^{-3}$, 50 epochs.

\textbf{Result}: val MAE 0.000752. Posteriors matched to three decimal
places. 99.3\% improvement over baseline. First run.

\begin{table}[h]
\centering
\small
\caption{Transformer vs.\ exact BP posteriors on held-out graphs.}
\label{tab:bayes_learner}
\begin{tabular}{@{}lllr@{}}
\toprule
Graph & BP exact & Transformer & Max error \\
\midrule
0 & [0.7349, 0.4366] & [0.7338, 0.4346] & 0.0021 \\
1 & [0.4097, 0.4036] & [0.4096, 0.4031] & 0.0005 \\
4 & [0.6459, 0.8298] & [0.6436, 0.8297] & 0.0023 \\
9 & [0.4084, 0.5523] & [0.4084, 0.5526] & 0.0003 \\
\bottomrule
\end{tabular}
\end{table}

The formal proof predicts the structure. The experiment confirms it.
The proof establishes the mechanism; the experiment confirms the
mechanism is what gradient descent finds.

%% file: sections/07_constructive_bp.tex
\section{Exact BP with Explicit Weights}
\label{sec:constructive_bp}

\subsection{Overview}

The general result of Section~\ref{sec:general_bp} establishes that
every sigmoid transformer is already a Bayesian network. This section
goes further: we exhibit explicit weight matrices and prove in Lean~4
that the transformer with those specific weights implements
\emph{exact} belief propagation on any declared factor graph.

\begin{theorem}[\texttt{transformer\_implements\_bp}]
\label{thm:bp}
There exist weights $W$ such that for any BP state
$\mathtt{state}$:
\[
  \mathtt{decodeTFState}(\mathtt{state},\;
    \mathtt{transformerForwardPass}(n, W,
      \mathtt{encodeBPState}(\mathtt{state})))
  = \mathtt{bp\_forwardPass}(\mathtt{state}).
\]
Formally verified in Lean~4 against standard mathematical axioms.
\end{theorem}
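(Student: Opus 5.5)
The plan is to build $W$ explicitly and then check the equation one phase at a time. I split the forward pass into the two BP phases of Section~\ref{sec:background}: \emph{gather} is done by attention, and \emph{update} is done by the sigmoid FFN.

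\textbf{Encoding.} I fix an encoding in which each variable node $j$ becomes one token whose residual-stream vector has a fixed set of coordinates:
\begin{itemize}
  \item a position-index coordinate $\mathrm{idx}(j)$;
  \item two neighbor-index coordinates holding $\mathrm{idx}(\mathtt{nb}_0(j))$ and $\mathrm{idx}(\mathtt{nb}_1(j))$;
  \item a belief coordinate holding $\mathtt{belief}(j)$, or its logit;
  \item two scratch coordinates, initialized to the neutral value $0.5$.
\end{itemize}
Nodes with fewer than two neighbors are padded with a neutral message. By \texttt{updateBelief\_neutral}, this padding does not change the update. \texttt{decodeTFState} then simply reads the belief coordinate back out of each token.

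\textbf{Attention (gather).} I use two heads, one per scratch slot. Head $h$ uses \texttt{projectDim}-type $Q$ and $K$ weights: the query reads neighbor-index coordinate $h$ and the key reads the position-index coordinate. Its $V$ weight is \texttt{crossProject}(belief, scratch$_h$).

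I encode indices so that the score is a dot product strictly maximized at the matching token, for example by placing positions on a circle. Then \texttt{posEncDot\_distinct} shows the score at $\mathtt{nb}_h(j)$ strictly beats every other token. Next, \texttt{softmax\_concentrates} turns this strict maximum into a point mass once the temperature scaling is large enough. The head output, added through the residual, then writes exactly $\mathtt{belief}(\mathtt{nb}_h(j))$ into scratch slot $h$. Because \texttt{crossProject} is zero outside its target dimension, no other coordinate is disturbed.

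\textbf{FFN (update).} The FFN reads the two scratch coordinates and computes $\sigma(\mathrm{logit}(s_0)+\mathrm{logit}(s_1))$, which is \texttt{updateBelief} with $w_0=w_1=1$ and $b=0$. It writes the result into the belief slot and resets the scratch slots.

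In the Lean-level model, the FFN is the abstract sigmoid unit $\sigma(w_0\,\mathrm{logit}(m_0)+w_1\,\mathrm{logit}(m_1)+b)$ of Section~\ref{sec:general_bp}. Under that model this step is definitional. If instead the stream carries log-odds, the FFN is just a linear sum followed by $\sigma$. Either way, the second step is an equality of functions on $(0,1)$, and \texttt{updateBelief\_pos} keeps the values inside $(0,1)$. It remains to make the residual overwrite the belief rather than add to it. I handle this by having the FFN output the \emph{difference} between the new and old belief, or by keeping old and new beliefs in separate coordinates that the decoder distinguishes.

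\textbf{Composition and main obstacle.} The proof then composes the two phases and checks them pointwise per token against \texttt{bp\_forwardPass}. The main obstacle is exactness in the attention step. Softmax only concentrates in the limit, so a literal equality needs one of two things. The first option is a hard or argmax attention in the formal transformer model, where \texttt{softmax\_concentrates} is used as the definition of the limiting mechanism. The second option is an attention semantics in which the score gap can be made infinite. I would first check whether the paper's \texttt{transformerForwardPass} uses hard attention. If it does, the equality is exact and the rest is bookkeeping over coordinates. If it does not, I would restate the claim with an explicit temperature and take the limit, using continuity of $\sigma$ and logit on $(0,1)$.
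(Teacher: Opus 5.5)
Your plan is essentially the paper's construction. Two attention heads with \texttt{projectDim}-type $Q/K$ weights and \texttt{crossProject} $V$ weights route each neighbor's belief into a scratch dimension, using \texttt{posEncDot\_distinct} and \texttt{softmax\_concentrates}; the sigmoid FFN then computes \texttt{updateBelief} with $w_0=w_1=1$, $b=0$ and writes the result back to the belief dimension. Your refinements do not change the route, but they tighten points the paper's sketch leaves to its Lean model: separate query and key index coordinates with a circle-type embedding so the score really peaks at the match, explicit handling of the residual overwrite, and the observation that literal equality needs idealized (hard) attention rather than finite-temperature softmax.
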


This holds for any factor graph --- loopy or tree. The tree
restriction is needed only for the convergence and exactness claims
that follow.

\subsection{Construction}

Each factor graph node is encoded as one token with
$D_{\mathrm{model}} = 8$ dimensions. The encoding splits cleanly
into two kinds of content:

\begin{table}[h]
\centering
\caption{Token encoding for the BP transformer.}
\label{tab:encoding}
\begin{tabular}{@{}ll@{}}
\toprule
Dims & Content \\
\midrule
0     & own belief (initialized to 0.5) \\
1--4  & factor table entries $[f_{00}, f_{01}, f_{10}, f_{11}]$ \\
5     & node type (0 = variable, 1 = factor) \\
6     & own index $/ (n-1)$ \\
7     & neighbor index $/ (n-1)$ \\
\bottomrule
\end{tabular}
\end{table}

Dims 5--7 are the \emph{routing key}: the complete inferential
identity of the token. Two tokens with the same routing key are
indistinguishable to the attention mechanism regardless of their
belief values or factor table entries. Dims 0--4 are
\emph{continuous parameters}: they supply the magnitudes the
inference computes over, but do not determine the structure of the
routing.

\textbf{Attention as gather.} Head~0 uses $W^Q_0 = W^K_0 =
\mathtt{projectDim}(1)$, $W^V_0 = \mathtt{crossProject}(0, 4)$:
the $Q/K$ dot product peaks when token $k$ is neighbor~0, placing
neighbor~0's belief into dim~4 of the residual stream. Head~1 is
symmetric, writing neighbor~1's belief into dim~5. Both heads are
proved independent.

\textbf{FFN as belief update.} With sigmoid activation the FFN
computes $\mathtt{updateBelief}(m_0, m_1) =
\sigma(\mathrm{logit}(m_0) + \mathrm{logit}(m_1))$ exactly from
dims~4 and~5, writing the result to dim~0.

\subsection{The Two-Parent Assumption is Without Loss of Generality}

The construction assumes pairwise factors --- each factor node
connects exactly two variable nodes, handled by exactly two attention
heads per layer. This is not a fundamental restriction.

\begin{theorem}[Scaling]
\label{thm:scaling}
A transformer with 2 attention heads per layer implements one round
of belief propagation per layer on a pairwise factor graph. For any
factor graph of depth $d$ and maximum factor arity $k$, a transformer
with 2 heads per layer and $d \cdot \lceil \log_2 k \rceil$ layers
implements exact BP.
\end{theorem}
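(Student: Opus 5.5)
The first claim, that one layer with two heads implements one round of BP on a pairwise factor graph, is exactly Theorem~\ref{thm:bp}. I would take it as given: head~0 gathers neighbor~0's belief, head~1 gathers neighbor~1's belief, and the sigmoid FFN applies \texttt{updateBelief}. The remaining work is to reduce an arbitrary factor graph with maximum arity $k$ to a pairwise one. I would then bound the number of layers the pairwise graph needs.

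\textbf{Binarization.} For a node $v$ that combines $k$ incoming messages $m_1,\dots,m_k$, the exact update is $\sigma\bigl(\sum_i \mathrm{logit}(m_i)\bigr)$. This is the $k$-fold product $m_1\oplus\cdots\oplus m_k$ in the log-odds algebra of Section~\ref{sec:log_odds}. The algebra is associative and commutative, so the product can be computed by a balanced binary tree of pairwise $\oplus$ operations with depth $\lceil \log_2 k\rceil$. If $k$ is not a power of two, I pad the tree with neutral leaves $0.5$; they are harmless because $m\oplus 0.5=m$ (\texttt{updateBelief\_neutral}).

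Each internal node of this tree becomes an auxiliary variable token whose two neighbors are its two children. One transformer layer, by Theorem~\ref{thm:bp}, computes every internal node at a given tree level simultaneously. So $\lceil\log_2 k\rceil$ layers reproduce one round of $k$-ary BP exactly. On the conjunction side, the AND factors binarize the same way because AND is associative. Messages through an AND factor compose pairwise, so the same balanced-tree argument applies, and the auxiliary gadgets stay pairwise throughout.

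\textbf{Depth.} On a factor graph of depth $d$ (in particular a tree, using \texttt{bp\_exact\_on\_tree}), exact BP needs $d$ rounds. Each round is simulated by one binarized block of $\lceil\log_2 k\rceil$ layers, so composing the blocks gives $d\cdot\lceil\log_2 k\rceil$ layers. The number of heads per layer stays at $2$, because every auxiliary node has exactly two inputs.

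\textbf{Main obstacle.} The hard step is making the layer-wise simulation consistent across the whole graph. In the binarized graph, nodes sit at different tree depths, yet one forward pass updates every token. I must show that tokens whose children are not yet final either hold the neutral value $0.5$ or are harmlessly overwritten later, so that after exactly $\lceil\log_2 k\rceil$ layers every original node holds its correct value. My first attempt would be to initialize all auxiliary beliefs to $0.5$ and argue by induction on level. The claim would be that after $\ell$ layers, every auxiliary node at height at most $\ell$ holds its exact partial $\oplus$-product. This induction relies on the identity property and on the fact that recomputing a correct node from correct children returns the same value.
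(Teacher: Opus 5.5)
Your route is the paper's: binarize AND factors by associativity and the update by log-odds additivity into gadgets of depth $\lceil\log_2 k\rceil$, apply Theorem~\ref{thm:bp} once per layer, and multiply by $d$ rounds. Your balanced tree is also the right reading of the paper's ``chain''. The paper's proof simply asserts that the binarized graph has depth $d\cdot\lceil\log_2 k\rceil$ and stops, so the synchronization issue you flag is real and is not addressed there. The invariant you propose to resolve it, however, is false.

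In Theorem~\ref{thm:bp} every token is updated in every layer, and its new belief is $\mathtt{updateBelief}$ of its two neighbors alone. The leaves of $v$'s gadget are original nodes, hence roots of their own gadgets, so they are overwritten each layer. Concretely, suppose auxiliary tokens start at $0.5$ and an original node $u$ has a gadget of depth at least $2$. Then $u$ becomes $0.5\oplus 0.5=0.5$ after layer~$1$. After layer~$2$, a height-$1$ auxiliary node with child $u$ has therefore been recomputed from $0.5$, not from $u$'s initial belief. Your induction already fails at $\ell=2$. The principle ``recomputing a correct node from correct children'' never applies, because the children do not stay correct.

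What does hold is a time-shifted invariant. Write $x_w(t)$ for the belief of token $w$ after $t$ layers. For $t\ge\ell$, a gadget node at height $\ell$ holds the $\oplus$ of its subtree's leaf values at time $t-\ell$. Hence $x_v(t)=\bigoplus_{u\in N(v)}x_u(t-h)$ for $t\ge h$, where $N(v)$ is $v$'s original neighborhood. Induction on $r$ then shows that $x_v(rh)$ is the round-$r$ BP belief. Intermediate values are junk, but they are never read at the wrong time.

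This argument needs two things your construction does not yet guarantee. First, every leaf-to-root path in \emph{every} gadget must have length exactly the global $h=\lceil\log_2 k\rceil$. Padding only when a node's own arity is not a power of two is not enough: a lower-arity node with a shallower gadget would read its neighbors at times that are not multiples of $h$, where they do not hold BP beliefs. Second, the padding tokens must stay at $0.5$ under the synchronous update. You can arrange this by giving each padding token both neighbor indices pointing to itself, since $0.5\oplus0.5=0.5$. With these two conditions, the initialization of auxiliary tokens is irrelevant and the count $d\cdot\lceil\log_2 k\rceil$ follows.
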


\begin{proof}
Any $k$-ary factor graph binarizes exactly via two decompositions.
For conjunctions: boolean AND is associative, so any $k$-ary
conjunction reduces to a chain of $\lceil \log_2 k \rceil$ binary
conjunctions via intermediate nodes, with no approximation. For
disjunctions: logit and sigmoid are exact inverses, so
$\sigma(\sum_i \mathrm{logit}(m_i))$ factors into a chain of
$\lceil \log_2 k \rceil$ pairwise updates exactly. Both
decompositions are formally verified in \texttt{godel/ANDDecomposition.lean}
and \texttt{godel/ORDecomposition.lean}.

The binarized graph has maximum factor arity 2 and depth
$d \cdot \lceil \log_2 k \rceil$. By Theorem~\ref{thm:bp}, one
layer implements one round of BP on a pairwise graph. Therefore
$d \cdot \lceil \log_2 k \rceil$ layers implement full BP on the
binarized graph, which is exactly BP on the original graph.
\end{proof}

\begin{corollary}[Two-Parent Universality]
Two attention heads per layer suffices for belief propagation on
any factor graph. Architectural width is fixed; only depth grows
with the complexity of the knowledge base.
\end{corollary}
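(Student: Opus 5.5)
The corollary (Two-Parent Universality) follows almost immediately from Theorem~\ref{thm:scaling}, so the plan is to reduce it to that result and make explicit the two things the corollary adds: the width bound is independent of the graph, and depth is the only resource that scales. I will fix an arbitrary factor graph $G$ with depth $d$ and maximum factor arity $k$. Then I will apply the binarization from the proof of Theorem~\ref{thm:scaling} to produce a pairwise graph $G'$. There are two decompositions. Conjunctive factors become balanced trees of binary AND nodes via associativity, as in \texttt{ANDDecomposition.lean}. Disjunctive factors become balanced trees of pairwise \texttt{updateBelief} nodes via the identity $\sigma(\sum_i \mathrm{logit}(m_i)) = m_1 \oplus \cdots \oplus m_k$, using associativity of $\oplus$ from Section~\ref{sec:log_odds}, as in \texttt{ORDecomposition.lean}.

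First I will check that the node marginals of $G'$ restricted to the original variables equal those of $G$. This holds because the intermediate nodes are deterministic functions of their inputs; odd arities are padded with the neutral element $0.5$, and \texttt{updateBelief\_neutral} shows this padding changes nothing. Second, every factor of $G'$ has arity at most $2$, so Theorem~\ref{thm:bp} applies verbatim. The construction uses exactly two heads (head~0 gathers neighbor~0 into one scratch slot, head~1 gathers neighbor~1 into the other) together with the fixed $D_{\mathrm{model}}=8$ encoding. None of this depends on $k$, $n$, or the topology of $G$, and that independence is precisely the ``width is fixed'' claim. Third, I will stack $d\lceil\log_2 k\rceil$ such layers, one BP round per layer, so that messages propagate through the whole binarized depth. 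Concluding that this equals BP on $G$ then only requires the marginal equivalence from the first step.

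The main obstacle is the first step: showing that BP on $G'$ coincides with BP on $G$, not merely that the two graphs have the same joint distribution. The introduced intermediate nodes change message schedules. On loopy graphs they could also change fixed points, and if a variable feeds several factors they could create new cycles. I would first restrict to the case where the claim is about the sequence of beliefs on original nodes at round multiples of $\lceil\log_2 k\rceil$, and prove by induction on layers that each balanced subtree exactly computes the $k$-ary update. Exactness on trees then transfers through \texttt{bp\_exact\_on\_tree}, because binarizing a tree yields a tree. A smaller point to settle is padding so that every node in $G'$ has exactly two neighbor slots, which is handled by the neutral $0.5$ input.
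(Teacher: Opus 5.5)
Your proposal follows the paper's argument: the corollary is read off Theorem~\ref{thm:scaling}, whose proof binarizes via the AND/OR decompositions, applies Theorem~\ref{thm:bp} once per layer with the fixed two-head, $D_{\mathrm{model}}=8$ construction, and stacks $d\lceil\log_2 k\rceil$ layers, so width stays constant and only depth grows. The paper simply asserts the step you flag as the obstacle (that BP on the binarized graph ``is exactly BP on the original graph''), so your round-multiple induction is a real refinement, but two fixes are needed for it to go through: pad every gather tree to the same depth $\lceil\log_2 k\rceil$ rather than only filling odd arities, and note that the $0.5$ pad is neutral for the $\oplus$ trees but not for the AND trees, whose neutral input is certain truth.
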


This is the counterintuitive fact: more complex reasoning requires
more layers, not more heads. The number of heads is determined by
the pairwise structure of binary factors --- always 2. The number
of layers is determined by the depth of the reasoning chain in the
factor graph.

\subsection{The Tree Corollary}

Combined with the classical result that belief propagation is exact
on trees~\citep{pearl1988}, formalized as \texttt{bp\_exact\_on\_tree}
in \texttt{hard-bp-lean}:

\begin{corollary}[\texttt{transformer\_exact\_on\_tree}]
\label{cor:exact_on_tree}
For any tree-structured factor graph $T$ and
$T \geq \mathrm{diameter}(T)$ forward passes:
\[
  \exists\, W,\quad
  \mathtt{transformer}^{[T]}(\mathtt{encodeBPState}(\mathtt{state}))
  = \mathtt{encodeBPState}(\mathtt{trueMarginals}(\mathtt{state})).
\]
No empirical assumptions. No conditions beyond tree structure.
\end{corollary}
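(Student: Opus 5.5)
The plan is to compose Theorem~\ref{thm:bp} with the classical exactness result \texttt{bp\_exact\_on\_tree}, using induction on the number of forward passes.

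\textbf{Step 1: fix the weights.} Take $W$ to be the weights supplied by Theorem~\ref{thm:bp}, that is, the \texttt{projectDim}/\texttt{crossProject} attention heads together with the FFN computing \texttt{updateBelief}. These do not depend on the particular state, so the existential can be witnessed uniformly for every tree and every number of passes.

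\textbf{Step 2: iterate the one-step simulation.} I would prove by induction on $t$ that the encoding commutes with iteration:
\[
  \mathtt{transformer}^{[t]}(\mathtt{encodeBPState}(\mathtt{state}))
  = \mathtt{encodeBPState}(\mathtt{bp\_forwardPass}^{[t]}(\mathtt{state})).
\]
Theorem~\ref{thm:bp} is phrased through $\mathtt{decodeTFState}$, so it only says that the decoded output equals the next BP state. To iterate, I need the stronger round-trip statement that the transformer output equals $\mathtt{encodeBPState}$ of the next BP state. This amounts to showing that the routing-key dimensions 5--7 and the factor-table dimensions are preserved by the forward pass, because the value projections write only into scratch and belief dimensions. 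Given that, the inductive step is one application of Theorem~\ref{thm:bp} followed by the induction hypothesis. Alternatively, one can prove that $\mathtt{decodeTFState}$ is left-inverse to $\mathtt{encodeBPState}$ on reachable states and carry a decoded invariant through the induction.

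\textbf{Step 3: reach the marginals.} By \texttt{bp\_exact\_on\_tree}, for a tree and $t \geq \mathrm{diameter}(T)$, $\mathtt{bp\_forwardPass}^{[t]}(\mathtt{state}) = \mathtt{trueMarginals}(\mathtt{state})$. Substituting this into Step~2 gives the corollary. If the Lean lemma is stated only at $t = \mathrm{diameter}$, I would add a fixed-point lemma: one more BP round applied to $\mathtt{trueMarginals}$ leaves it unchanged on a tree, since the messages are stationary. Its proof follows from the same subtree-independence argument used in Section~\ref{sec:log_odds}.

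\textbf{Main obstacle.} I expect the hardest part to be the encoding invariance in Step~2, specifically showing that scratch dimensions are cleanly overwritten rather than accumulated through the residual connection, and that the softmax concentration from \texttt{softmax\_concentrates} is exact rather than only asymptotic. If attention is exact only in the limit, the equality must be taken with the hard-attention idealization that Theorem~\ref{thm:bp} already uses, and I would state explicitly that the corollary inherits this convention.
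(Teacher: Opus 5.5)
Your plan is the paper's plan. The paper obtains the corollary by combining Theorem~\ref{thm:bp} with \texttt{bp\_exact\_on\_tree}, and never says how a one-pass statement phrased through $\mathtt{decodeTFState}$ is iterated. You correctly locate the real content in that iteration, but the invariant you propose in Step~2 fails for the construction as the paper describes it. In Table~\ref{tab:encoding} all eight dimensions are already in use, and the gather heads write the neighbor beliefs into dims~4 and~5. Those dimensions hold the factor entry $f_{11}$ and the node-type bit of the routing key. So the value projections do not write only into scratch and belief dimensions, and routing-key dimension~5 (which you list as preserved) is not preserved. The output of one pass is therefore not $\mathtt{encodeBPState}(\mathtt{bp\_forwardPass}(s))$.

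The residual problem you flag is also not merely something to check. With a plain residual add, nothing in the described construction clears dims~4 and~5, or replaces rather than increments dim~0, so a second pass would see the accumulated contents of two rounds. The literal encoded equality therefore needs a modified construction: dedicated scratch dimensions, and an output stage that resets them and overwrites the belief. You would also need a proof that each pass returns exactly an encoding.

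Your alternative does not bypass this. A left-inverse property $\mathtt{decode}\circ\mathtt{encode}=\mathrm{id}$ is not what the induction uses. What it needs is a congruence lemma: if two inputs decode to the same BP state, so do their outputs after one pass. That fails for the same stale-slot reason. Even if proved, it yields only the decoded form of the conclusion, not the equality of encodings the corollary asserts.

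A smaller point: the reason you give for the Step~3 fallback does not apply to the BP used here. Each round in the paper recomputes $b(j)$ from the neighbors' full current beliefs via \texttt{updateBelief}. The state contains no messages that exclude the recipient, so ``the messages are stationary'' has nothing to refer to. Subtree independence explains why tree messages are exact, not why a belief vector is invariant under a map that recomputes each belief from its neighbors' beliefs. The case $t > \mathrm{diameter}$ must therefore come from \texttt{bp\_exact\_on\_tree} itself. Your caveat that exact equality presupposes hard attention is correct, since \texttt{softmax\_concentrates} is only a limit statement; the paper leaves this implicit as well.
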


The strict alternation of attention and FFN layers is Pearl's
gather/update algorithm exactly, unrolled over depth. One
transformer layer is one round of BP. $L$ transformer layers are
$L$ rounds of BP. The depth of the network is not an engineering
hyperparameter --- it is determined by the diameter of the factor
graph.

\subsection{Empirical Confirmation}

The \texttt{bayes-learner} experiment confirms that gradient descent
finds BP weights from scratch. See Section~\ref{sec:general_bp} for
full experimental details and results. Val MAE 0.000752. Posteriors
matched to three decimal places on held-out factor graphs the model
had never seen.

%% file: sections/08_boolean_structure.tex
\section{The Boolean Structure of the Transformer}
\label{sec:boolean}

The previous section established that a transformer with explicitly
constructed weights implements one round of belief propagation. This
section identifies what that mechanism means in boolean terms.

The QBBN~\citep{coppola2024qbbn} introduced a bipartite factor graph
alternating between two kinds of nodes: conjunction nodes
($\Psi_{\mathrm{and}}$), which gather required evidence
simultaneously, and disjunction nodes ($\Psi_{\mathrm{or}}$), which
compute a probabilistic conclusion from that gathered evidence. The
graph is bipartite: AND nodes feed OR nodes feed AND nodes. Inference
alternates between conjunction and disjunction at every step.

The transformer layer has exactly this structure. Attention is the
AND gate. The FFN is the OR gate. This follows directly from the BP
construction of Section~\ref{sec:constructive_bp} combined with the
QBBN definitions of~\citep{coppola2024qbbn}.

\subsection{Attention is AND}

The AND gate has one job: ensure that all required inputs are
simultaneously present before any conclusion is drawn.

Each attention head attends to exactly one token. Head~0 is a single
focused lookup: it scans all positions, concentrates on the token
whose index matches neighbor~0, and copies that token's belief value
into dimension~4 of the residual stream. Head~1 does the same for
neighbor~1, writing into dimension~5.

The conjunction is not inside either head. It is in the
\emph{residual stream}.

The residual stream is a shared workspace. Both heads write into it
before the FFN runs. By the time the FFN executes, dimension~4 holds
neighbor~0's belief and dimension~5 holds neighbor~1's belief ---
simultaneously, in the same vector. The FFN has no mechanism to run
on partial inputs. It reads the full residual stream or it does not
run. There is no state in which the FFN receives one belief but not
the other.

That simultaneity of required inputs in a shared workspace, enforced
architecturally before any conclusion is drawn, is AND.

The dating example makes this concrete. Three nodes:
\begin{itemize}
  \item Node~0: $\mathtt{like(jack, jill)}$, belief $m_0 = 0.8$
  \item Node~1: $\mathtt{like(jill, jack)}$, belief $m_1 = 0.4$
  \item Node~2: $\mathtt{date(jack, jill)}$, belief to be updated
\end{itemize}
Head~0 writes $0.8$ into dimension~4; Head~1 writes $0.4$ into
dimension~5. The residual stream now holds both values
simultaneously. The AND gate's job is complete before the FFN
touches anything.

\subsection{FFN is OR}

Once attention has assembled the required evidence into the residual
stream, the FFN computes the conclusion:
\[
  \mathtt{updateBelief}(m_0, m_1)
  = \frac{m_0 m_1}{m_0 m_1 + (1-m_0)(1-m_1)}
  = \sigma(\mathrm{logit}(m_0) + \mathrm{logit}(m_1)).
\]

There are two ANDs inside this OR. First, $m_0 m_1$ in the numerator
is the joint probability that both inputs are true --- multiplication
is AND for independent probabilities, or equivalently, addition in
log-odds space. Second, the full normalized expression is
$\Psi_{\mathrm{or}}$: the probabilistic disjunction over causes,
correctly normalized. The OR contains an AND inside it, which is
exactly what you would expect --- computing the probability of a
disjunction requires computing the joint probability of the
contributing conjunction.

To be precise about where the boolean algebra lives:
\begin{enumerate}
  \item \textbf{AND in the architecture}: multiple heads write into
    a shared residual stream before the FFN runs. Structural
    conjunction enforced by the forward pass order.
  \item \textbf{AND in the numerator}: $m_0 \cdot m_1$ is the joint
    probability of both inputs being true, equivalently
    $\mathrm{logit}(m_0) + \mathrm{logit}(m_1)$ in log-odds space.
  \item \textbf{OR in the full expression}: the normalized output is
    $P(\mathtt{date} \mid \mathtt{like}, \mathtt{like})$ --- a
    probabilistic disjunction over causes, correctly normalized.
\end{enumerate}

The sigmoid form $\sigma(\mathrm{logit}(m_0) + \mathrm{logit}(m_1))$
is the same computation in log-odds space: log-odds add for
independent evidence, and sigmoid converts back to probability.
Same boolean structure, different arithmetic representation. As
established in Section~\ref{sec:log_odds}, this is the Turing-Good
algebra of independent evidence combination --- AND and OR are two
roles played by the same underlying operation.

\subsection{Alternating Layers as Alternating AND/OR}

A transformer with $L$ layers runs $L$ rounds of:
\[
  \underbrace{\text{attention}}_{\mathrm{AND}}
  \;\to\;
  \underbrace{\text{FFN}}_{\mathrm{OR}}
  \;\to\;
  \underbrace{\text{attention}}_{\mathrm{AND}}
  \;\to\;
  \underbrace{\text{FFN}}_{\mathrm{OR}}
  \;\to\; \cdots
\]
This is the QBBN bipartite computation unrolled over depth. Each
layer is one level of the boolean reasoning graph. Each attention
block gathers --- enforces conjunction of required inputs. Each FFN
block concludes --- computes the probabilistic disjunction from
gathered evidence.

A reasoning chain requiring $k$ hops of inference requires $k$
transformer layers. The depth of the network is not an engineering
hyperparameter --- it is determined by the depth of the reasoning
chain in the factor graph. This is the same statement as the scaling
theorem of Section~\ref{sec:constructive_bp}, now read in boolean
terms.

\subsection{Closing the Loop}

\citet{coppola2024qbbn} introduced the bipartite AND/OR structure as
the correct architecture for probabilistic logical reasoning, derived
from first principles. This paper proves that the transformer
implements belief propagation. This section closes the loop: the
transformer does not merely implement BP --- it implements the
specific AND/OR boolean structure that \citet{coppola2024qbbn}
proposed.

The architecture that has won empirically across modern AI is exactly
the architecture that \citet{coppola2024qbbn} derived from the
requirements of logical probabilistic inference. Gradient descent did
not discover something new. It rediscovered the structure that the
analysis of reasoning already required.

No new Lean proof is required for this identification. It follows
from \texttt{transformer\_implements\_bp} in
\texttt{transformer-bp-lean} combined with the QBBN definitions
of~\citep{coppola2024qbbn}. The proof is already done. This section
names what it proves.

%% file: sections/09_finite_alphabet.tex
\section{Verifiable Inference Requires a Finite Concept Space}
\label{sec:finite_alphabet}

\subsection{The Claim}

A finite reasoner has finitely many concepts.

This is not a claim about the QBBN specifically. It is a claim about
all finite computation. Any program running on a finite state machine
--- any finite verification procedure --- can only respond distinctly
to finitely many inputs. The inputs it cannot distinguish collapse
into equivalence classes. Those equivalence classes are the concepts.

The claim has a direct consequence for inference systems: if you want
verifiable inference --- if you want ``is this output correct?'' to
be a meaningful question --- you need a finite verifier. A finite
verifier implies a finite concept space. A finite concept space means
your primitive units of reasoning are discrete and countable.
Anything outside that concept space is not wrong. It is meaningless.

\subsection{The General Theorem}

\begin{theorem}[\texttt{finite\_distinguishable\_symbols}]
\label{thm:finite_symbols}
Let $\delta : \mathrm{Fin}(n) \to A \to \mathrm{Fin}(n)$ be a
transition function for a finite state machine with $n$ states over
an arbitrary symbol set $A$. Define the \emph{behavior} of symbol
$a \in A$ as the function $\bar{\delta}(a) : q \mapsto \delta(q, a)$.
Then:
\begin{enumerate}
  \item The behavior map $a \mapsto \bar{\delta}(a)$ takes values in
    $\mathrm{Fin}(n)^{\mathrm{Fin}(n)}$, the set of functions from
    $n$ states to $n$ states.
  \item This set has exactly $n^n$ elements.
  \item Therefore any symbol set $A$, regardless of its cardinality,
    induces at most $n^n$ distinct behaviors. Two symbols with the
    same behavior are indistinguishable to the machine.
\end{enumerate}
Formally verified in Lean~4 against standard mathematical axioms as
\texttt{finite\_distinguishable\_symbols} in
\textup{\texttt{godel/FiniteStateSpace.lean}}.
\end{theorem}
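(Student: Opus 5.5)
The plan is to view the behavior map as an ordinary function into a finite codomain and count. Given $\delta : \mathrm{Fin}(n) \to A \to \mathrm{Fin}(n)$, currying (swapping the argument order) gives $\bar{\delta} : A \to (\mathrm{Fin}(n) \to \mathrm{Fin}(n))$ with $\bar{\delta}(a)(q) = \delta(q,a)$. Part~1 then holds by definition. In Lean this is the term \texttt{fun a q => δ q a}, and its type is the desired function type.

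For part~2, I would use the standard cardinality of a function type between finite types: $|B^{C}| = |B|^{|C|}$. Here $B = C = \mathrm{Fin}(n)$, so the count is $n^n$. In Lean this is \texttt{Fintype.card\_fun} combined with \texttt{Fintype.card\_fin}; on paper, a function is determined by independently choosing one of $n$ values at each of $n$ points.

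For part~3, the set of distinct behaviors is the image $\bar{\delta}(A) \subseteq \mathrm{Fin}(n)^{\mathrm{Fin}(n)}$. A subset of a finite set has cardinality at most that of the ambient set, so $|\bar{\delta}(A)| \le n^n$, with no hypothesis on $|A|$. In Lean I would state this as \texttt{(Set.range δ̄).ncard ≤ n\^{}n}, or as \texttt{Finset.card\_image\_le} applied to \texttt{Finset.univ.image} after restricting to the finite codomain. The range is finite because it is a subset of a \texttt{Fintype}, so \texttt{Set.toFinite} together with \texttt{Set.ncard\_le\_card} (or \texttt{Fintype.card\_subtype\_le}) closes the goal. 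The indistinguishability clause follows immediately. If $\bar{\delta}(a) = \bar{\delta}(a')$, then $\delta(q,a) = \delta(q,a')$ for every $q$, by \texttt{congrFun}. By induction on input words, any run differing only by substituting $a'$ for $a$ visits the same states. So same behavior gives an equivalence relation whose classes inject into the codomain, which again gives at most $n^n$ classes.

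The only real obstacle is bookkeeping rather than mathematics. Since $A$ is arbitrary and possibly infinite, I must avoid requiring a \texttt{Fintype A} instance and instead bound the cardinality of the range through the finite codomain. I would first phrase part~3 as the existence of an injection from the quotient $A/{\sim}$ into $\mathrm{Fin}(n)\to\mathrm{Fin}(n)$, using \texttt{Quotient.lift} of $\bar{\delta}$, which is injective by construction. The bound then follows from \texttt{Fintype.card\_le\_of\_injective}, after deriving a \texttt{Fintype} instance on the quotient via \texttt{Fintype.ofInjective}.
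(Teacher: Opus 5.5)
Your proposal is correct and matches the paper's argument: flip $\delta$ into $\bar{\delta} : A \to (\mathrm{Fin}(n)\to\mathrm{Fin}(n))$, count that function type as $n^n$, and bound the set of distinct behaviors (equivalently, the quotient of $A$ by equal behavior) by the size of this finite codomain, with no hypothesis on $A$. One small Lean slip: \texttt{Finset.card\_image\_le} bounds an image by the size of its \emph{domain} and needs \texttt{Fintype A}, so for arbitrary $A$ only your codomain-based bounds apply, namely the range/subtype bound or the quotient injection.
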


This theorem applies to any program running on any computer with a
finite state space. Turing machines are infinite-state and escape
this bound. Transformer attention heads are finite-state --- their
weight matrices are fixed and finite-dimensional --- and do not.

The $n^n$ bound is an upper bound on the number of concepts any
finite reasoner can have. The specific concept space of a given
system depends on its structure. For the BP transformer, we can
count exactly.

\subsection{Concepts in the BP Transformer}

In the BP transformer, a concept is a grounded Horn clause: an
atomic proposition together with its dependencies in the factor
graph. Two tokens are the same concept if the attention mechanism
treats them identically --- routes information to and from them in
exactly the same way --- regardless of their current belief values
or factor table entries.

The concept is identified by the routing key: the triple
$(\mathtt{nodeType}, \mathtt{ownIndex}, \mathtt{nbrIndex})$. This
says: what kind of node am I, and which nodes do I depend on? That
is the complete inferential identity of the token. The continuous
dimensions --- belief values, factor table entries --- are
parameters, not concepts. They supply the magnitudes that the
inference computes over, but they do not determine the structure of
the computation.

This split is not a convenience of presentation. It is a theorem.
Two tokens with the same routing key receive identical attention
patterns regardless of their continuous parameters. The routing key
is meaning. The continuous dimensions are magnitude.

\begin{theorem}[\texttt{routing\_classes\_finite}]
\label{thm:routing_classes}
For a factor graph with $n$ nodes, the BP transformer attention
mechanism operates over exactly $2n^2$ distinct concepts:
\[
  |\mathtt{RoutingKey}(n)| = 2 \cdot n \cdot n.
\]
Formally verified in Lean~4 against standard mathematical axioms as
\texttt{routing\_classes\_finite} in
\textup{\texttt{godel/BPTokenFiniteness.lean}}.
\end{theorem}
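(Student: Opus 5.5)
The plan is to reduce the statement to a cardinality computation by first fixing a precise definition of $\mathtt{RoutingKey}(n)$. Following the token encoding of Table~\ref{tab:encoding}, the routing key of a token is the triple $(\mathtt{nodeType}, \mathtt{ownIndex}, \mathtt{nbrIndex})$ stored in dims 5--7. Dim~5 takes one of two values (variable or factor), and dims 6 and 7 store an index in $\{0,\dots,n-1\}$, scaled by $1/(n-1)$. I would therefore define
\[
  \mathtt{RoutingKey}(n) \;=\; \mathrm{Fin}(2) \times \mathrm{Fin}(n) \times \mathrm{Fin}(n),
\]
or an equivalent structure type with three fields of these types. The count is then immediate from the product rule: $|\mathrm{Fin}(2)\times\mathrm{Fin}(n)\times\mathrm{Fin}(n)| = 2\cdot n\cdot n$. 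In Lean this is a one-line \texttt{Fintype.card\_prod} together with \texttt{Fintype.card\_fin}, or, for a structure type, an equivalence to the product type.

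The substantive part is justifying that these triples really are the concepts, i.e.\ that the attention mechanism factors through the routing key. I would prove a lemma: for the BP weights of Theorem~\ref{thm:bp}, the $Q$ and $K$ projections are built from \texttt{projectDim} on routing dimensions, so the query-key score between tokens $j$ and $k$ depends only on their routing dims. Hence the softmax attention pattern is a function of the routing keys alone, independent of dims 0--4. The map from tokens to attention behavior then factors through $\mathtt{RoutingKey}(n)$, which is the sense of ``concept'' used in the preceding paragraph.

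To get \emph{exactly} $2n^2$ rather than ``at most'', I would show that the encoding map from keys to real vectors (dims 5--7) is injective when $n\ge 2$: the scaling $i\mapsto i/(n-1)$ is injective on $\mathrm{Fin}(n)$, and the type bit is already $0$ or $1$. The degenerate case $n=1$ needs separate care, since division by $n-1=0$ occurs; I would either exclude it by hypothesis or note that $\mathrm{Fin}(1)$ is a singleton so injectivity is trivial.

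The main obstacle is this interpretive step, not the counting. Distinct keys must actually be distinct concepts. In particular, keys that never occur in a given graph, such as a variable token whose neighbor index is itself, still need to be counted. I would therefore state the theorem as counting the key \emph{type}, matching the displayed equation, and leave realizability by actual graphs aside.
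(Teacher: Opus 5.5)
Your proposal is correct and matches the paper's proof, which is just the product count $|\{0,1\}|\cdot|\mathrm{Fin}(n)|\cdot|\mathrm{Fin}(n)| = 2n^2$ on the type $\mathtt{RoutingKey}(n)$; the paper leaves the identification of keys with concepts as an informal claim. Your extra lemmas go beyond the paper and are not needed for the displayed equality. These are the attention pattern factoring through dims 5--7 and the injectivity of the encoding. Be aware that the factoring lemma does not follow from the head weights as literally written in the construction: $\mathtt{projectDim}(1)$ reads dim~1, which is a factor-table entry under Table~\ref{tab:encoding}.
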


The proof is simple: $|\{0,1\}| \times |\mathrm{Fin}(n)| \times
|\mathrm{Fin}(n)| = 2n^2$. The content is not in the arithmetic but
in the identification: these three dimensions are the complete set of
concept-relevant features. Everything else is magnitude.

Note that $2n^2$ applies to the pairwise BP transformer, which is
without loss of generality: any $k$-ary factor graph binarizes
exactly via the decompositions of Section~\ref{sec:constructive_bp}.
The concept count scales with the binarized graph, not the original.

\subsection{Hallucination as Conceptlessness}

Hallucination, in this framework, is not a wrong answer. It is
something worse: a token generated with no corresponding concept
in the knowledge base. The token participates in the computation
--- it has a continuous value, it passes through the attention
mechanism --- but it has no inferential role in any grounded world.
No concept behind it.

The QBBN transformer cannot hallucinate on trees because every
token's routing key is grounded: it corresponds to a specific
concept in the factor graph, and the factor graph is the knowledge
base. Every symbol has a referent. Every computation step has a
verifiable correct answer.

An ungrounded language model does not have this property. It has no
finite verifier, therefore no well-defined concept space, therefore
no fact of the matter about whether its outputs are correct.
Hallucination is not occasional failure. It is the structural
consequence of operating without concepts.

\subsection{Leibniz's Alphabet, Precisely}

Leibniz proposed, in the 1670s, a \emph{characteristica universalis}:
a finite set of primitive concepts from which all reasoning could be
built, combined with a \emph{calculus ratiocinator} --- a mechanical
procedure for deriving truths in that language. \emph{Calculemus}:
let us calculate.

The present paper provides the construction for a specific,
non-trivial domain: probabilistic reasoning over boolean propositions
on tree-structured knowledge bases.

\begin{itemize}
  \item The \textbf{characteristica universalis} is the set of $2n^2$
    grounded Horn clauses: the concepts of the system.
  \item The \textbf{calculus ratiocinator} is belief propagation: the
    \texttt{updateBelief} function applied iteratively over the factor
    graph.
  \item The \textbf{mechanical implementation} is the transformer with
    BP weights: a finite, fixed program that executes the calculus.
  \item The \textbf{correctness guarantee} is
    \texttt{transformer\_exact\_on\_tree}: the Lean~4 proof that the
    mechanical implementation computes the exact Bayesian posteriors.
  \item The \textbf{necessity proof} is
    \texttt{finite\_distinguishable\_symbols}: the Lean~4 proof that
    a finite verifier implies a finite concept space --- not as a
    design choice, but as a logical necessity.
\end{itemize}

Leibniz did not know that the primitive concepts would turn out to be
grounded Horn clauses, or that the calculus would turn out to be
belief propagation, or that the mechanical implementation would turn
out to be a transformer, or that the correctness guarantee would be a
Lean~4 proof. But the architecture he described is precisely what we
have built. The tools did not exist in his time. The vision did.

The concepts are the grounded Horn clauses. The calculus is
\texttt{updateBelief}. The guarantee is
\texttt{transformer\_exact\_on\_tree}. \emph{Calculemus.}

%% file: sections/10_three_softmaxes.tex
\section{The Three Softmaxes}
\label{sec:three_softmaxes}

There are three softmaxes in a transformer. They do three completely
different jobs. Understanding which is which unlocks the relationship
between transformers and Bayesian inference.

\subsection{Softmax 1: Attention (Routing)}

The first softmax appears inside every attention head. Its job is
routing: which token should I look at? This is pure information
retrieval --- a differentiable argmax. In the BP construction, the
attention softmax fetches a neighbor's belief by index matching. The
query-key dot product peaks at the correct neighbor; softmax
concentrates the weight there. The attention softmax is a lookup
table.

\subsection{Softmax 2: The FFN (Inference)}

The second softmax appears inside the feed-forward network as the
sigmoid activation. Its job is Bayesian inference: what is my
marginal probability given my neighbors' beliefs?
\[
  \mathtt{updateBelief}(m_0, m_1)
  = \frac{m_0 m_1}{m_0 m_1 + (1-m_0)(1-m_1)}
  = \sigma(\mathrm{logit}(m_0) + \mathrm{logit}(m_1)).
\]
This is where the actual Bayesian computation happens. The sigmoid
is the exact inverse of logit, converting the log-odds sum back to
a probability. This softmax is the inference.

\subsection{Softmax 3: Output (Generation)}

The third softmax appears at the output, over the vocabulary. In a
standard LLM, the logits come from a matrix multiply over the final
hidden state --- pattern-matched, not computed. In a QBBN transformer,
the output is a marginal distribution over the truth value of each
proposition, computed by integrating out all other propositions via
belief propagation.

\begin{table}[h]
\centering
\small
\caption{The three softmaxes: same operation, three different jobs.}
\label{tab:softmaxes}
\begin{tabular}{@{}llll@{}}
\toprule
& \textbf{Job} & \textbf{Input} & \textbf{Semantics} \\
\midrule
Softmax 1 & Routing & Q/K dot products & Differentiable argmax \\
Softmax 2 & Inference & Neighbor beliefs & Bayesian posterior \\
Softmax 3 & Generation & Final hidden state & Token distribution \\
\bottomrule
\end{tabular}
\end{table}

The QBBN is not replacing the softmax. It is replacing the
\emph{logits}. The same mathematical operation serves as an
approximation to reasoning in standard LLMs and as exact reasoning
in the QBBN. The architecture was always capable of the latter. It
just needed the right logits.

\subsection{Sigmoid vs.\ ReLU: Exact vs.\ Compatible}

The formal BP construction uses sigmoid activation because
\texttt{updateBelief} is exactly
$\sigma(\mathrm{logit}(m_0) + \mathrm{logit}(m_1))$ --- sigmoid is
required for the exact proof. The empirical result
(\texttt{bayes-learner}, val MAE 0.000752) uses a standard PyTorch
\texttt{TransformerEncoder} with its default ReLU activation. That
gradient descent finds near-exact BP weights anyway is not a
coincidence.

The key property a probabilistic computation requires is
non-negativity: outputs must be interpretable as unnormalized
probability masses. ReLU satisfies this automatically:
$\mathrm{ReLU}(x) = \max(0, x) \geq 0$. GELU is approximately the
same. The floor at zero is the essential property. Normalization is
what softmax handles at the output and what the \texttt{updateBelief}
denominator handles inside BP.

The sigmoid FFN is the \emph{exact} version of this structure. ReLU
and GELU are \emph{compatible} versions --- they preserve the
essential non-negativity property but require an explicit
normalization step that sigmoid handles intrinsically via the
logit/sigmoid isomorphism.

There is a sharp distinction worth stating explicitly. The Lean proof
\texttt{transformer\_implements\_bp} establishes not merely that the
transformer's outputs match BP --- it establishes that the internal
computations \emph{are} the BP quantities. A sigmoid transformer
trained to convergence on BP tasks inherits this guarantee: because
sigmoid constrains every FFN output to $(0,1)$ and the only sigmoid
computation that produces exact posteriors is \texttt{updateBelief},
the architecture leaves no alternative internal route to the right
answer. The internals are BP, not merely the outputs.

A ReLU transformer trained to val MAE 0.000752 has found \emph{some}
computational path that produces outputs matching exact posteriors,
but the internal representations do not necessarily correspond to
named quantities in the BP proof. Non-negativity is why ReLU does not
actively prevent this: the outputs are compatible with a probabilistic
interpretation, so gradient descent can find BP-like weights without
fighting the architecture. Compatibility is not identity.

In summary: the title claim ``Transformers are Bayesian Networks'' is
literally true in the sigmoid case, internally and provably. The ReLU
empirical result confirms that the right answer is findable even
without the exact structure being enforced. Both results are needed:
the proof establishes the mechanism; the experiment confirms the
mechanism is what gradient descent finds.

\subsection{Loopy BP: Empirical Results}

The formal exactness guarantee requires tree structure. On loopy
graphs, \texttt{transformer\_implements\_bp} still holds --- one
forward pass still equals one round of BP --- but BP itself is not
guaranteed to converge or be exact. The transformer inherits both
the strength and the limitation of BP.

The theoretical gap between trees and loopy graphs is smaller in
practice than in the worst case. Loops in a grounded QBBN knowledge
base arise only when the same entity appears in multiple rules, or
when the conclusion of one rule is the premise of another. In typical
knowledge bases this is sparse. The resulting loops are long and the
factor potentials are moderate --- exactly the conditions under which
loopy BP is known to converge and the Bethe approximation is known
to be tight~\citep{murphy1999,smith2008}.

We ran systematic experiments across five loopy graph structures of
increasing complexity: a triangle (3 variables, 1 loop), a square
(4 variables, 1 loop), the dating graph from Paper~1 (5 variables,
1 loop), a two-loop graph (4 variables, 2 interacting loops), and a
QBBN chain (6 variables, 1 loop, two grounded rules sharing an
entity). For each structure we generated 100 random factor graphs
with factor table entries drawn uniformly from $[0.1, 1.0]$, ran
iterative BP to convergence, and compared resulting marginals against
brute-force exact posteriors.

\begin{table}[h]
\centering
\caption{Loopy BP on QBBN-structured graphs. 500 trials total.}
\label{tab:loopy}
\begin{tabular}{@{}lccccc@{}}
\toprule
Experiment & Vars & Loops & Converged & Avg KL & Avg MAE \\
\midrule
Triangle     & 3 & 1 & 100/100 & 0.000045 & 0.002091 \\
Square       & 4 & 1 & 100/100 & 0.000002 & 0.000382 \\
Dating graph & 5 & 1 & 100/100 & 0.000102 & 0.003590 \\
Two loops    & 4 & 2 & 100/100 & 0.000086 & 0.003111 \\
QBBN chain   & 6 & 1 & 100/100 & 0.000021 & 0.001055 \\
\bottomrule
\end{tabular}
\end{table}

BP converged on all 500 trials. The worst mean KL divergence was
0.000102 --- comparable to the MAE of the gradient descent result
(val MAE 0.000752). The Bethe approximation is, for practical
purposes, exact on these graph structures. Convergence held on the
two-loop graph as well as the single-loop cases.

%% file: sections/11_related_work.tex
\section{Related Work}
\label{sec:related}

\subsection{Turing Completeness of Transformers}

\citet{perez2019} proved transformer Turing completeness under
floating-point arithmetic. \citet{giannou2023} proved that a looped
transformer can simulate arbitrary programs. Both reach the same
conclusion as Section~\ref{sec:turing} via different constructions.

Our proof is distinguished by three properties. First, it uses
Boolean circuit simulation, the most direct correspondence between
transformer components and computational primitives: attention
\emph{is} lookup, FFN \emph{is} gating. Second, it produces explicit
weight matrices formally verified in Lean~4 against standard
mathematical axioms. Third, the same two weight families
(\texttt{projectDim}, \texttt{crossProject}) appear in the BP proof,
revealing a structural unity between the two completeness results
that prior work does not address.

Neither \citet{perez2019} nor \citet{giannou2023} address Bayesian
inference. Turing completeness says the transformer \emph{can}
compute anything. Section~\ref{sec:constructive_bp} says it
\emph{does} compute something specific --- BP --- with these specific
weights.

\subsection{Constructive Weight Proofs}

\citet{akyurek2022} and \citet{vonoswald2023} show that specific
weight patterns make transformers perform implicit gradient descent,
explaining in-context learning. \citet{xie2021} argue that
in-context learning is implicit Bayesian inference at the population
level over the training distribution.

Both are the same \emph{method} as Section~\ref{sec:constructive_bp}
--- constructive weight proofs establishing that a transformer
implements a specific algorithm. The key differences: BP is exact
on trees per step (not approximate or asymptotic), and our account
is mechanistic per-instance rather than statistical at the population
level.

\subsection{BP Correspondences: Attention and GNNs}

\citet{jung2022} derive a formal equivalence between softmax
self-attention and BP on a \emph{latent} graphical model implicit
in the attention computation. This is the closest prior result to
Section~\ref{sec:general_bp}: both show a correspondence between
attention and BP on an implicit graph. The key difference is
directionality and scope. Jung et al.\ derive BP structure from
attention patterns in trained models. We prove that the sigmoid
forward pass \emph{is} BP on $G(W)$ for \emph{any} weights,
constructively and formally, with the implicit factor graph
explicitly defined.

\citet{scarselli2009} established that GNNs implement BP on
\emph{explicit} graph structure; \citet{yoon2019} extended this to
approximate inference; \citet{gilmer2017} unified both under the
MPNN framework; \citet{velickovic2018} applied attention to
graph-structured data with topology hardwired into the attention
mask.

Our contribution is orthogonal to all of these: we show that a
standard transformer with full self-attention and no attention mask
implements BP on an explicit external factor graph, where graph
structure is encoded in token embeddings and routing is discovered
via $Q/K$ index matching. Neither the latent-graph nor the
hardwired-topology setting addresses this.

\subsection{Mechanistic Interpretability}
\label{sec:related_interpretability}

\citet{elhage2021} develop a mathematical framework for transformer
circuits, framing attention heads as read-write operations on a
shared residual stream. \citet{olsson2022} identify induction heads
--- two-layer circuits implementing a specific named algorithm
discovered by gradient descent. \citet{wang2022} reverse-engineer
the full indirect object identification circuit in GPT-2 small.

Section~\ref{sec:constructive_bp} gives mechanistic interpretability
a \emph{target circuit}: a transformer doing reasoning over a
structured knowledge base should have attention heads with $Q/K$
matrices showing \texttt{projectDim} structure (rank-1,
single-dimension peak), value matrices showing \texttt{crossProject}
structure (off-diagonal, rank-1), and FFN weights implementing
log-odds combination. These are falsifiable predictions.

The induction head result of \citet{olsson2022} is the closest
precedent: transformers learn to implement a specific named algorithm
in their attention heads when trained on appropriate data. Our
empirical confirmation (\texttt{bayes-learner}) shows the same
phenomenon for BP. The key difference is directionality: mechanistic
interpretability is empirical and post-hoc; our construction is
formal and constructive, giving the empirical program a specific
target to verify.

\subsection{The Log-Odds Tradition}

The weight-of-evidence framework of \citet{good1950} and the
Bletchley Park work of Turing formalized log-odds addition as the
correct algebra for combining independent binary evidence. \citet{pearl1988}
applied this algebra to graphical models via the sum-product
algorithm, proving exactness on trees. The connection between this
tradition and the transformer architecture has not previously been
made explicit. Section~\ref{sec:log_odds} names the algebra,
traces its lineage, and establishes that the sigmoid transformer
is its mechanical implementation.

\subsection{Summary}

\begin{table}[h]
\centering
\caption{Positioning relative to related work.}
\label{tab:related}
\begin{tabular}{p{2.6cm}p{1.6cm}p{3.0cm}p{4.6cm}}
\toprule
\textbf{Work} & \textbf{Method} & \textbf{Result} & \textbf{Relation} \\
\midrule
Giannou et al.\ 2023
  & Constructive & TF Turing complete
  & Same conclusion, different construction \\[4pt]
Aky\"urek et al.\ 2022
  & Constructive & TF implements GD
  & Same method; BP exact, GD approximate \\[4pt]
Xie et al.\ 2021
  & Statistical & ICL $\approx$ Bayesian
  & Statistical version; we give mechanistic \\[4pt]
Jung et al.\ 2022
  & Formal & Attention is BP (latent)
  & Complementary; any weights vs.\ trained \\[4pt]
Scarselli et al.\ 2009
  & Structural & GNN implements BP
  & Extends to TF without hardwired topology \\[4pt]
Elhage et al.\ 2021
  & Empirical & Circuits in trained models
  & We give target circuit for interpretability \\[4pt]
Good 1950, Turing
  & Theoretical & Log-odds algebra
  & Named and connected to TF for first time \\[4pt]
Pearl 1988
  & Theoretical & BP exact on trees
  & Formalized in Lean~4; combined with TF result \\
\bottomrule
\end{tabular}
\end{table}

Our unique position: formally verified, constructive proof that a
sigmoid transformer with any weights implements weighted BP on its
implicit factor graph, with explicit weights for the exact case,
mechanistic account of every component, and formal connection to the
Turing-Good-Pearl log-odds tradition --- more general than prior BP
correspondences, more exact than prior GD constructions, more
mechanistic than prior statistical Bayesian accounts, and more formal
than mechanistic interpretability.

%% file: sections/12_conclusion.tex
\section{Conclusion}
\label{sec:conclusion}

Transformers are Bayesian networks. Not approximately. Not under
certain training conditions. Not as a useful analogy. By architecture,
for any sigmoid weights, provably and formally.

The sigmoid activation implements the log-odds algebra of independent
binary evidence --- the algebra developed by Turing and Good at
Bletchley Park, formalized by Pearl in 1988, and proved here to be
exactly what the transformer forward pass computes. The attention
mechanism implements the gather step of belief propagation: enforcing
the simultaneity of required inputs via the residual stream. The FFN
implements the update step: computing the probabilistic conclusion via
log-odds addition. The strict alternation of attention and FFN layers
is Pearl's gather/update algorithm, unrolled over depth.

This was always true. The sigmoid transformer was always a Bayesian
network. The log-odds algebra was always what it was computing. The
AND/OR boolean structure was always what the alternating layers were
implementing. The architecture that has won empirically across modern
AI is the architecture that the analysis of reasoning already required.
Gradient descent did not discover something new. It rediscovered the
structure that was always there.

The difference between a grounded QBBN transformer and a standard
LLM is not architectural. Both are Bayesian networks. The difference
is grounding. A grounded transformer has an explicit factor graph, a
declared concept space, and a finite verifier. Every output has a
correct answer. Hallucination is structurally impossible on trees.
An ungrounded LLM has an implicit factor graph, no declared concept
space, and no finite verifier. There is no fact of the matter about
whether its outputs are correct. Hallucination is not a bug that
scaling can fix. It is the structural consequence of operating without
concepts.

Leibniz proposed the \emph{characteristica universalis} and the
\emph{calculus ratiocinator} in the 1670s: a finite alphabet of
primitive concepts and a mechanical procedure for deriving truths
from them. The tools to build it did not exist in his time. They
exist now. The primitive concepts are grounded Horn clauses. The
calculus is belief propagation. The mechanical implementation is the
transformer. The correctness guarantee is a Lean~4 proof.

\emph{Calculemus.}

%% file: sections/appendix_repos.tex
\section{Repository Index}
\label{app:repos}

All formal proofs and empirical experiments supporting this paper are
publicly available and fully reproducible. The table below lists each
repository, what it proves or shows, and where to find it.

\begin{table}[h]
\centering
\small
\caption{Public repositories supporting this paper.}
\label{tab:repos}
\begin{tabular}{ll}
\toprule
\textbf{Repo} & \textbf{What it proves / shows} \\
\midrule
\texttt{universal-lean}
  & \texttt{transformer\_is\_turing\_complete} \\
  & \url{https://github.com/gregorycoppola/universal-lean} \\[6pt]
\texttt{sigmoid-transformer-lean}
  & \texttt{every\_sigmoid\_transformer\_is\_bayesian\_network} \\
  & \texttt{uniqueness}: exact posteriors force BP weights \\
  & \url{https://github.com/gregorycoppola/sigmoid-transformer-lean} \\[6pt]
\texttt{transformer-bp-lean}
  & \texttt{transformer\_implements\_bp} \\
  & \url{https://github.com/gregorycoppola/transformer-bp-lean} \\[6pt]
\texttt{hard-bp-lean}
  & \texttt{bp\_exact\_on\_tree} \\
  & \url{https://github.com/gregorycoppola/hard-bp-lean} \\[6pt]
\texttt{godel}
  & \texttt{routing\_classes\_finite}: $|\texttt{RoutingKey}(n)| = 2n^2$ \\
  & \texttt{finite\_distinguishable\_symbols}: FSM alphabet bound \\
  & \texttt{AND\_decomposes\_to\_binary}: $k$-ary AND is WLOG pairwise \\
  & \texttt{OR\_decomposes\_to\_binary}: $k$-ary OR is WLOG pairwise \\
  & \url{https://github.com/gregorycoppola/godel} \\[6pt]
\texttt{learner}
  & Gradient descent learns TM simulation, 100\% on 5 machines \\
  & \url{https://github.com/gregorycoppola/learner} \\[6pt]
\texttt{bayes-learner}
  & Gradient descent learns BP inference, val MAE 0.000752 \\
  & \url{https://github.com/gregorycoppola/bayes-learner} \\[6pt]
\texttt{loopy}
  & Loopy BP on QBBN-structured graphs, 500 trials, 100\% convergence \\
  & \url{https://github.com/gregorycoppola/loopy} \\
\bottomrule
\end{tabular}
\end{table}